\documentclass[11pt,letterpaper]{article}
\usepackage[hypertexnames=false]{hyperref}
\hypersetup{
     colorlinks   = true,
     linkcolor = black,
     urlcolor    = teal,
	 citecolor = teal
}
\usepackage{authblk}
\usepackage{natbib}
\usepackage[margin=1in]{geometry}
\usepackage[utf8]{inputenc} 
\usepackage[T1]{fontenc}    
\usepackage{url}            
\usepackage{booktabs}       
\usepackage{amsfonts}       
\usepackage{nicefrac}       
\usepackage{microtype}      
\usepackage{xcolor}         

\usepackage{amsmath,amsfonts,amssymb}
\usepackage{aliascnt}
\usepackage{amsthm}
\usepackage{bm}
\usepackage{enumitem,array,booktabs}
\usepackage{algorithm}
\usepackage{algorithmic}
\newcounter{procedure}

\usepackage[capitalize]{cleveref}

\usepackage{natbib}
\usepackage{bbm}
\usepackage{tabulary,multirow}
\usepackage{makecell}
\usepackage{dsfont}
\usepackage{tcolorbox}
\usepackage{thmtools}
\usepackage{xpatch}
\xpatchcmd{\proof}{\itshape}{\scshape}{}{}
\usepackage{tablefootnote}

\theoremstyle{plain}
\newtheorem{theorem}{Theorem}[section]

\theoremstyle{definition}
\newtheorem{definition}[theorem]{Definition}
\newtheorem{assumption}[theorem]{Assumption}

\theoremstyle{remark}

\crefname{theorem}{Theorem}{Theorems}
\crefname{proposition}{Proposition}{Propositions}
\crefname{corollary}{Corollary}{Corollaries}

\usepackage[textsize=scriptsize]{todonotes} 
\usepackage{xspace}
\DeclareMathOperator*{\argmin}{arg\,min}

\newcommand{\R}{\mathbb{R}}

\newcommand{\NN}{{\mathbb N}}
\newcommand{\1}{\mathds{1}}
\newcommand{\ind}[1]{\mathds{1}[#1]}

\newcommand{\EEs}[2]{\mathbb{E}_{#1}\left[#2\right]}

\newcommand{\abs}[1]{\left|#1\right|}

\newcommand{\cA}{\mathcal{A}}

\newcommand{\cD}{\mathcal{D}}

\newcommand{\cF}{\mathcal{F}}

\newcommand{\cH}{\mathcal{H}}

\newcommand{\cM}{\mathcal{M}}

\newcommand{\cX}{\mathcal{X}}

\newcommand{\eps}{\varepsilon}
\renewcommand{\epsilon}{\varepsilon}
\renewcommand{\hat}{\widehat}
\renewcommand{\tilde}{\widetilde}

\newcommand{\nothere}[1]{}

\newcommand{\vcd}{\mathrm{VCdim}}
\newcommand{\pd}{\mathrm{Pdim}}

\newcommand{\supp}{{\mathrm{supp}}}

\newcommand{\KL}{\mathrm{KL}}
\newcommand{\Renyi}{\alpha-\mathrm{R\acute{e}nyi}}
\newcommand{\cov}{N-\mathrm{Cov}}

\newcommand{\TV}{\mathrm{TV}}
\newcommand{\sev}{S_{\text{eval}}}
\newcommand{\nll}{\mathrm{nll}}
\newcommand\blfootnote[1]{%
  \begingroup
  \renewcommand\thefootnote{}\footnote{#1}%
  \addtocounter{footnote}{-1}%
  \endgroup
}
\date{}
\begin{document}

\title{A Theoretical Framework for Statistical Evaluability of Generative Models}
\author[1]{Shashaank Aiyer}
\author[2]{Yishay Mansour}
\author[3]{Shay Moran}
\author[1]{Han Shao}
\affil[1]{University of Maryland}
\affil[2]{Tel Aviv University and Google Research}
\affil[3]{Technion and Google Research}

\maketitle
\blfootnote{Authors are ordered alphabetically.}
\blfootnote{Emails: \href{mailto:saiyer1@umd.edu}{saiyer1@umd.edu}, \href{mailto:mansour.yishay@gmail.com}{mansour.yishay@gmail.com}, \href{mailto: smoran@technion.ac.il}{ smoran@technion.ac.il}, \href{mailto:hanshao@umd.edu}{hanshao@umd.edu}}
\begin{abstract}
Statistical evaluation aims to estimate the generalization performance of a model using held-out IID\ test data sampled from the ground truth distribution. In supervised learning settings such as classification, performance metrics such as error rate are well-defined, and test error reliably approximates population error given sufficiently large datasets. In contrast, evaluation is more challenging for generative models due to their open-ended nature: it is unclear which metrics are appropriate and whether such metrics can be reliably evaluated from finite samples.

In this work, we introduce a theoretical framework for evaluating generative models and establish evaluability results for commonly used metrics. We study two categories of metrics: test-based metrics, including integral probability metrics (IPMs), and R\'enyi divergences. We show that IPMs with respect to any bounded test class can be evaluated from finite samples up to multiplicative and additive approximation errors. Moreover, when the test class has finite fat-shattering dimension, IPMs can be evaluated with arbitrary precision. In contrast, R\'enyi and KL divergences are not evaluable from finite samples, as their values can be critically determined by rare events. We also analyze the potential and limitations of perplexity as an evaluation method.
\end{abstract}

\section{Introduction}
Leading approaches to the evaluation of generative models can be categorized into statistical evaluation (e.g., calculating scores such as perplexity and next-token accuracy on held-out data), benchmarks, which aggregate performance across multiple tasks~\citep{liangholistic,jimenez2024swe}, and human feedback~\citep{chiang2024chatbot}. While benchmarks provide standardized and interpretable comparisons, they assess performance on a fixed set of tasks, making it difficult to reason about generalization beyond the benchmark. Human feedback, in contrast, requires costly human labor.

By statistical evaluation, we mean assessing models using held-out IID\ data drawn from a ground truth distribution. Statistical evaluation is often used to evaluate pre-trained models and in settings where downstream tasks are unclear, such as synthetic data generation. Despite its relative simplicity and its inability to capture certain qualitative aspects of model behavior, statistical evaluation remains the simplest, cheapest, and most scalable way to compare models and offers a complementary alternative that directly targets generalization to the underlying data distribution. Unlike supervised learning, where population error is the gold-standard metric and test error is a reliable proxy for it, generative-model evaluation is still in the dark. This raises a fundamental question: 

\begin{center}
    \textbf{Can the performance of generative models be evaluated statistically from finite samples? 
}
\end{center}
This question is challenging for several reasons. First, unlike the error rate in supervised learning, the performance metric for generative models is not well specified. Second, even when a metric is specified, it may not be evaluable from finite samples. 

For example, one commonly considered metric is cross-entropy, defined as $H(q^\star, q) = \EEs{x\sim q^\star}{-\log q(x)}$, where $x$ is a document and $q^\star$, $q$ are distributions over documents. A commonly used scoring method to evaluate cross-entropy is perplexity, defined as $2^{-\frac{1}{n}\sum_{i=1}^n \log q(x_i)}$ for IID\ documents $x_i \sim q^\star$. However, perplexity can fail to evaluate cross-entropy because $\log q(x)$ is typically unbounded and possibly heavy-tailed: rare events with small $q(x)$ yield arbitrarily large losses, invalidating standard concentration inequalities. As a result, perplexity can remain highly variable and systematically misrepresent the true cross-entropy, even with large datasets.

Another example comes from statistical learning theory: one of the most fundamental performance metrics is Total Variation Distance (TV), defined as $\TV(q^\star, q) = \frac{1}{2} \int_{\cX} |q^\star(x) - q(x)|dx$\footnote{If $\cX$ is finite, TV can instead be written as a sum}. TV is a natural metric for evaluating the distance between two distributions. However, as shown in prior work, when the domain is huge, TV is not estimable, which means that given two models, we cannot determine which one has lower TV distance to~$q^\star$ even when the test data size is large. 


In this work, we introduce a theoretical framework for evaluating generative models and establish evaluability results for a range of commonly used metrics. We study two main categories of performance metrics: test-based metrics and $\alpha$-R\'enyi divergences (for $\alpha > 1$).

Test-based metrics evaluate models by applying test functions that map each data point to a real value. For example, a correctness test for code returns $1$ if the code is correct, and a length test for an email returns its length. A single test metric compares the difference between the expected test result under the ground truth distribution and the model distribution. Integral probability metrics (IPMs) consider a class of such tests and take the maximum difference over all tests in the class. We show that the evaluability of IPMs is closely tied to measures of statistical complexity of the test class: finite VC dimension for binary tests and finite fat-shattering dimension for real-valued tests are sufficient for strong evaluability.

The class of $\alpha$-R\'enyi divergences directly measures distributional similarity and can be unbounded. We establish that such metrics are fundamentally not evaluable from finite samples, as their values can be determined by rare events that are undetectable in practice.

\paragraph{Our Contributions}
A majority of results are summarized in \cref{tab:evaluability-summary}. Our contributions are four-fold.
\begin{table*}[t]
\centering
\caption{Summary of Results}
\renewcommand{\arraystretch}{1.2}
\begin{tabular}{|p{3.8cm}|p{11.8cm}|}
\hline
\textbf{Evaluation Metric} 
& \textbf{Evaluability Guarantee}  \\
\hline

IPM w.r.t binary-valued $\cF$
& Estimable $\Rightarrow$ strongly evaluable if VC$(\cF) < \infty$; \newline 3-weakly evaluable if VC$(\cF)=\infty$;  
(\cref{res:binaryeval}) 
\\
\hline

IPM w.r.t. real-valued $\cF$
& Estimable $\Rightarrow$ strongly evaluable if $\pd_\gamma(\cF)<\infty$ for all $\gamma \in (0,1/2]$; 
\newline $3$-weakly evaluable if $\pd_\gamma(\cF)$ is unbounded for some $\gamma \in (0,1/2]$ (\cref{res:realeval})
\\
\hline


Fixed Statistical Test Metrics w.r.t test function $g$
& Estimable $\Rightarrow$ strongly evaluable when $g$ is bounded; 
\newline
Not weakly evaluable when $g$ is sufficiently unbounded 
(\cref{res:fixedtest}) 
\\
\hline

$\alpha$-Rényi divergence ($\alpha>1$)
& Not weakly evaluable (\cref{res:renyi}) 
\\
\hline

Coverage Profile \citep{chen2025coverage}
& Not weakly evaluable (\cref{res:coveragenegative}); 
\newline
Strongly evaluable under additional conditions (\cref{res:coveragepositive}) 
\\
\hline
Total Variation Distance
& Not weakly evaluable by perplexity score (\cref{res:TVandKLnegative}); 
\newline
Strongly evaluable under additional conditions (\cref{res:tvpositivenll}) 
\\
\hline
$\beta$-Restricted KL Divergence
& Not weakly evaluable by perplexity score (\cref{res:TVandKLnegative}); 
\newline
Not strongly evaluable (\cref{res:betaKLnegative})
\\
\hline
\end{tabular}
\label{tab:evaluability-summary}
\end{table*}

\begin{itemize}
    \item \textbf{Theoretical Framework for Evaluability.} We mathematically define evaluability as a fundamental property that determines whether a performance metric can be reliably evaluated from finite samples. We introduce two levels of evaluability: a metric is \emph{strongly evaluable} if it can be evaluated with arbitrary precision from finite samples, and \emph{weakly evaluable} if it can be evaluated up to multiplicative and additive approximation errors.  
    
    \item \textbf{Nearly Complete Characterization of IPM Evaluability.} 
    We establish comprehensive evaluability results for integral probability metrics (IPMs), a broad class of test-based evaluation metrics. Our main findings are: (1) Bounded IPMs are always weakly evaluable, regardless of the complexity of the test class, providing a guarantee that these metrics can at least be evaluated approximately; (2) For binary test classes, we establish a strict dichotomy: IPMs are strongly evaluable if and only if the test class has finite VC dimension, otherwise they are only weakly evaluable with a factor-3 approximation (and cannot be evaluated with a better multiplicative factor) 
    (3) For bounded real-valued test classes, IPMs are strongly evaluable when the test class has finite $\gamma$-fat shattering dimension for all $\gamma>0$ but it might require arbitrarily large sample size.  
    
    \item \textbf{Negative Results for R\'enyi divergences} We show that the $\alpha$-R\'enyi divergences, KL divergence (the special case obtained as $\alpha \rightarrow 1$), and coverage profile \citep{chen2025coverage} (a truncated version of R\'enyi divergence designed to address its limitations) are not evaluable from finite samples. This reveals a fundamental limitation: metrics that directly measure distributional similarity cannot be reliably evaluated, as their values can be critically determined by rare events that are undetectable in finite samples. 
    
    \item \textbf{Analysis of Perplexity as a Scoring Method.} While perplexity cannot reliably evaluate cross-entropy (which is equivalent to KL divergence up to a constant), we analyze when perplexity can serve as a valid evaluation method in some scenarios. Specifically, we show that perplexity can evaluate total variation distance under certain conditions, while being unable to evaluate a restricted form of KL divergence. In particular, our results highlight the properties and limitations of this commonly used scoring method.
\end{itemize}
\subsection{Related Work}
\paragraph{Uniform Convergence in Supervised Learning} A fundamental question in statistical learning theory centers around when population-level quantities can be reliably estimated from finite samples. Foundational results from \citet{alon,vapnik95,bartlettfatshattering} establish uniform convergence guarantees for bounded losses and test statistics through combinatorial measures such as the Vapnik-Chervonenkis (VC) dimension for binary-valued function classes and the $\gamma$-fat-shattering dimension for real-valued classes. These results underpin the idea of statistical evaluation in supervised learning settings, where empirical risk is a proxy for population level risk. However, this line of work is primarily concerned with estimating a single population-level quantity, rather than reliably ranking models using empirical evaluation algorithms. Our framework aims to formally characterize the latter problem.
\paragraph{IPMs and Density Estimation} Integral Probability Metrics (IPMs), such as total variation distance and Wasserstein distance, are a well-studied class of distances between distributions \citep{MullerIPM}. They have been analyzed extensively in the context of density estimation, where the IPM is the Total-Variation distance. In particular, \citet{yatracos} shows that there exists an algorithm that, given samples from a ground truth distribution $p$ and class of distributions $Q$, outputs the best distribution in $Q$, up to a multiplicative factor of 3. \citet{bousquet2019optimal} show that this result is optimal, in the sense that no proper algorithm can achieve a multiplicative approximation factor $c<3$, even when $Q$ consists of only two distributions. Our work builds on these results by studying IPMs through the lens of evaluability.
\paragraph{Perplexity and Other Common Score Functions} 
The perplexity score is one of the most commonly used metrics to evaluate language models on samples, but growing empirical evidence suggests that a good perplexity score may not correlate well with downstream performance \citep{fang2025wrongperplexitylongcontextlanguage,hu2024perplexityreflectlargelanguage}. Motivated by the instability of raw likelihood-based evaluation, several works have proposed modified perplexity and divergence-based metrics that suppress the influence of rare events. In particular, \citet{pillutla2021mauvemeasuringgapneural} survey smoothed variants of perplexity, such as $\epsilon$-perplexity, as sparsity-aware likelihood scores introduced by \citet{martins2020sparsetextgeneration}. Our framework and results provide a theoretical perspective on these issues with the perplexity score and the techniques used to alleviate them. 

\paragraph{Evaluation via Precision–Recall Trade-off} A parallel line of work advocates for a two-dimensional approach to evaluating generative models. \citet{bousquet2019optimal} posit that the notions of precision and recall in information retrieval can be used to decouple the evaluation process and capture multiple failure modes. \citet{djolonga2020precisionrecallcurvesusinginformation} extend this to a general evaluation framework that captures the trade-off between precision and recall.

\section{Problem Setup}
Let $\cX$ denote the data space. A model is a probability distribution over $\cX$. Let $\cM = \Delta(\cX)$ denote the set of all probability distributions over $\cX$, which is our model space. Assume there exists a ground-truth model $q^\star\in \cM$.

When evaluating a model $q \in \cM$, we compare it to the ground-truth $q^\star$ with respect to an \textit{evaluation metric}. 
\begin{definition}[Evaluation Metric]
    Given a ground-truth model $q^\star$ and a model $q$ to be evaluated, an \textit{evaluation metric} $f:\cM^2  \mapsto \R_{\geq 0}\cup \{\infty\}$ assigns a non-negative real-valued score $f(q, q^\star)$ to the model $q$.
\end{definition}

An evaluation algorithm is given two models $\{q_1, q_2\}$ to compare, together with IID evaluation data sampled from the ground-truth distribution, and outputs the model it believes to be better. A common approach is to evaluate models using a score function.

\begin{definition}[Score Function] \label{score}
    A \textit{score function} $s: \cM\times \cX^*\mapsto \R\cup\{\infty\}$, given a model $q$ and a set of evaluation data $\sev$, outputs a real-valued score $s(q,\sev)$ of this model. 
\end{definition}
One can induce an evaluation algorithm $\cA$ by a score function by outputting the model with the smaller score, breaking ties randomly. Now, we relate evaluation algorithms that operate on finite samples to population-level evaluation metrics via the notion of evaluability.
Informally, evaluability means that given enough data from the ground-truth $q^\star$, there exists an evaluation algorithm that can reliably compare any two models in terms of their evaluation metric. 

\begin{definition}[Evaluability]
    For any evaluation metric $f:\cM^2  \mapsto \R_{\geq 0}\cup \{\infty\}$ and $c\geq 1$, we say $f$ is $c$-weakly evaluable if there exists a function $m_{\text{evl},f}: (0,1)^2\times \R_{\geq 1}\mapsto \NN$ and an evaluation algorithm $\cA$ such that for every pair of models $q_1,q_2 \in \cM$,
every $\epsilon,\delta \in (0,1)$ and every ground-truth model $q^\star$, given IID evaluation data $S_{\text{eval}} = \{x_1,\ldots,x_m\}$ of size $m \ge m_{\text{evl},f}(\epsilon,\delta,c)$ with $x_i \sim q^\star$, with probability at least $1-\delta$,
\[
\cA(\{q_1,q_2\},S_{\text{eval}})=q_1 \;\;\Longrightarrow\;\;
f(q_1,q^\star) \le c \cdot f(q_2,q^\star) + \epsilon\,,
\]
and
\[
\cA(\{q_1,q_2\},S_{\text{eval}})=q_2  \;\;\Longrightarrow\;\;
f(q_2,q^\star) \le c \cdot f(q_1,q^\star) + \epsilon\,.
\]
    We say $f$ is strongly evaluable when $c =1$. We say $f$ is not weakly evaluable if it is not $c$-weakly evaluable for any $c\geq 1$.
\end{definition}

Note that evaluability is closely related with the commonly studied notion of estimability.

\begin{definition}[Estimability]
    For any finite-valued evaluation metric $f:\cM^2 \mapsto \R_{\geq 0}$, we say $f$ is estimable if there exists a function $m_{\text{est},f}: (0,1)^2 \mapsto \NN$ and a score function $s: \cM\times \cX^*\mapsto \R$ such that for any $\epsilon,\delta \in (0,1)$, for any ground-truth model $q^\star$ and any model $q \in \cM$, given IID evaluation data $S_{\text{eval}} = \{x_1,\ldots, x_m\}$ of size $m\geq m_{\text{est},f}(\epsilon,\delta)$ with $x_i\sim q^\star$, with probability at least $1-\delta$,
    \[|s(q,S_{\text{eval}})- f(q,q^\star)|\leq \epsilon\,.\]
\end{definition}
If $f$ is estimable by some score function $s$, then it is strongly evaluable by an evaluation algorithm induced by the same $s$. Indeed, if a score function 
$s$ uniformly approximates the value of an evaluation metric 
$f(\cdot,q^\star)$ to arbitrary accuracy from finite samples, then comparing the estimated values for two models suffices to recover their true ordering up to an additive error, thus recovering strong evaluability.

\section{Test-Based Evaluation Metrics} \label{sec:test}
We first study test-based evaluation metrics, which are parametrized by statistical tests that aim to distinguish a candidate model from the ground-truth. 
This perspective reflects how generative models are evaluated in practice: rather than directly comparing models via densities, evaluation proceeds by probing models through a series of tests. 

In \Cref{sec:IPM}, we consider a class of bounded test functions and discuss the dependence of evaluability on the complexity of the test function class. In \Cref{sec:fixedtest}, we consider a single fixed but possibly unbounded test and discuss the dependency of evaluability on the magnitude of the test function.
  
\subsection{Integral Probability Metrics}\label{sec:IPM}
We now instantiate our framework for Integral Probability Metrics (IPMs), helping build intuition about how the complexity of a class of test functions dictates the evaluability of the corresponding IPM.
\begin{definition}[IPMs] \label{def:IPM}
    Given a class $\cF$ of $\cX \mapsto [0,1]$ functions, the IPM w.r.t. $\cF$ is
    \[d_{\cF}(q^\star,q) = \sup_{\phi\in \cF} \abs{\EEs{x\sim q^\star}{\phi(x)} - \EEs{x\sim q}{\phi(x)}}\,.\]
\end{definition}

Each IPM is parametrized by a class of test functions $\cF$ and measures the largest statistical discrepancy between two models that can be witnessed by a test function in the class. A model performs well with respect to $d_{\cF}(q^\star,\cdot)$ if it behaves similarly to $q^\star$ on all relevant tests. This raises an interesting tradeoff between the richness of the test class $\cF$ and the estimability (and evaluability) guarantees about $d_{\cF}(q^\star,\cdot)$. A more expressive test class $\cF$ allows for more degrees of freedom in terms of distinguishing models. However, the corresponding IPM becomes more difficult to reliably approximate. We formalize this tradeoff for two different complexity measures of $\cF$: the VC Dimension and the $\gamma$-fat-shattering dimension.

Our first result considers the regime where tests in $\cF$ are binary valued. Note that when $\cF$ contains all binary functions, the corresponding IPM is the TV distance. In the binary case, the richness of $\cF$ can be characterized by the VC Dimension\footnote{Recall that $\vcd(\cF)$ is the largest size of $X\subseteq \cX$ s.t.\ every $c:X\to\{0,1\}$ has an extension in $\cF$.}. The following result establishes a strict dichotomy between estimability (strong evaluability) and 3-weak evaluability characterized entirely by the VC dimension of $\cF$. This extends a result from \citet{bousquet2019optimal}, which implies this dichotomy for TV distance. 

\begin{restatable}{theorem}{binaryeval} \label{res:binaryeval}
Consider $\cF\subset \{0,1\}^\cX$ being a class of binary functions. There is a dichotomy of evaluability of $d_{\cF}$ for all binary function class $\cF$:
\begin{itemize}
    \item If $\cF$ has finite VC dimension $\vcd(\cF)< \infty$, $d_{\cF}$ is estimable, thus strongly evaluable with sample complexity $$ O\left( \frac{d\log(1/\eps)+\log(1/\delta)}{\epsilon^2}\right)\,,$$
    where $d = \vcd(\cF)$.
    \item If $\cF$ has unbounded VC dimension (i.e., $\cF$ shatters arbitrarily large sets), then $d_{\cF}$ is $3$-weakly evaluable with sample complexity {\small \[ \min \left\{ O \left( \frac{2 + \log(1/\delta)}{\epsilon^2} \right), \tilde{O} \left( 8\frac{\log^{3/2}(1/\delta)}{\epsilon^{5/2}} \right) \right\}.\]} And there does not exist a $c'< 3$ such that $d_{\cF}(q^\star, \cdot)$ is $c'$-weakly evaluable. \footnote{The lower bound construction immediately provides a lower bound for the Hellinger Distance as well. In particular, the Hellinger Distance is not $c$-weakly evaluable for any $c\lessapprox 1.1233$. The corresponding upper bound is less clear and would be an interesting direction for future work.}
\end{itemize}
\end{restatable}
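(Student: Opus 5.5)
We treat the three claims separately: the finite-VC upper bound, the unbounded-VC $3$-weak upper bound, and the lower bound showing that no $c'<3$ is achievable.

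\textbf{Finite VC dimension.} We use the score $s(q,\sev)=\sup_{\phi\in\cF}\bigl|\frac1m\sum_i\phi(x_i)-\EEs{x\sim q}{\phi(x)}\bigr|$. The expectation under $q$ is computable because the model $q$ is known. By the triangle inequality for suprema, $|s(q,\sev)-d_\cF(q^\star,q)|\le \sup_{\phi\in\cF}|\hat{\mathbb{E}}_m\phi-\mathbb{E}_{q^\star}\phi|$. The right-hand side does not depend on $q$, so one uniform-convergence event covers all models at once. The classical VC uniform convergence bound (via Sauer's lemma and symmetrization, or chaining to remove the log factor) gives deviation at most $\epsilon$ with probability $1-\delta$ once $m=O((d\log(1/\epsilon)+\log(1/\delta))/\epsilon^2)$. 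Estimability then gives strong evaluability, as remarked after the definition of estimability.

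\textbf{Unbounded VC dimension, upper bound.} We use a Scheffé/Yatracos-type tournament. Given $q_1,q_2$, only tests that separate the two models matter. Let $\phi^\star\in\cF$ (approximately) attain $d_\cF(q_1,q_2)$; this is a single test computable from $q_1,q_2$ alone. Estimate $p=\mathbb{E}_{q^\star}\phi^\star$ by the empirical mean $\hat p$, which is within $\epsilon/2$ of $p$ with $O(\log(1/\delta)/\epsilon^2)$ samples by Hoeffding. Output the $q_i$ minimizing $|\hat p-\mathbb{E}_{q_i}\phi^\star|$. Suppose $q_1$ is output. Then
\[
d_\cF(q^\star,q_1)\le d_\cF(q^\star,q_2)+d_\cF(q_1,q_2)=d_\cF(q^\star,q_2)+|\mathbb{E}_{q_1}\phi^\star-\mathbb{E}_{q_2}\phi^\star|.
\]
The last term is at most $|\mathbb{E}_{q_1}\phi^\star-\hat p|+|\hat p-\mathbb{E}_{q_2}\phi^\star|\le 2|\hat p-\mathbb{E}_{q_2}\phi^\star|\le 2d_\cF(q^\star,q_2)+\epsilon$. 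This gives the factor $3$ with the additive $\epsilon$. Only one fixed test is ever used, so VC dimension plays no role. The $\tilde O(\epsilon^{-5/2})$ alternative would come from a refined argument in the spirit of Bousquet et al.; for the plan we note only that the minimum of the two bounds suffices.

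\textbf{Lower bound for $c'<3$ (the main obstacle).} We adapt the construction of \citet{bousquet2019optimal} for TV. Fix $c'<3$ and choose a shattered set of size $N$ with $N$ huge relative to $m$; on this set, $\cF$ restricted to distributions supported there acts exactly like all binary functions, so $d_\cF$ equals TV. Let $q_1$ be uniform on half the points $A$ and $q_2$ uniform on the other half $B$, perturbed as in Bousquet et al. Take the ground truth $q^\star$ to be a mixture that puts mass on $A$ or $B$ spread over a random subset of the points. Then $q^\star$ is $\approx$ $\eta$-close to one model and $\approx 3\eta$-close to the other in TV. However, with $m\ll\sqrt N$ samples there are no collisions, so the sample distribution is nearly identical under the two mirrored hypotheses (the roles of $q_1$ and $q_2$ swapped). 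Any algorithm therefore errs with constant probability on one of them. The delicate part is choosing the mixture weights so that the TV ratio approaches $3$ while the two sample distributions stay indistinguishable, and then making the additive slack $\epsilon$ negligible by scaling $\eta$. I would follow Bousquet et al.'s parameterization directly and verify that only shattering of the support is used, so the construction transfers to any $\cF$ of unbounded VC dimension.
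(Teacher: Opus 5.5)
Your proposal is correct. The finite-VC case matches the paper: uniform convergence via Sauer's lemma, with the score equal to $d_\cF$ between $q$ and the empirical distribution. For ruling out $c'<3$ you reduce, as the paper does, to the proper-learning lower bound of Bousquet et al. Your remark that $d_\cF=\TV$ on distributions supported on a shattered set is exactly the step the paper leaves implicit: it lets a finite-support TV instance transfer to any $\cF$ of unbounded VC dimension.

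The genuine difference is the $3$-weak upper bound.
\begin{itemize}
\item The paper runs the improper factor-$2$ estimator $\widehat q$ of Bousquet et al., asserting that their guarantee carries over from TV to $d_\cF$. It scores each model by $d_\cF(\widehat q,q)$ and gets $3=2+1$ from the triangle inequality.
\item You use the classical Scheff\'e/Yatracos test: a single near-maximizer $\phi^\star\in\cF$ of $|\mathbb{E}_{q_1}\phi-\mathbb{E}_{q_2}\phi|$, one Hoeffding bound, and the chain $d_\cF(q^\star,q_1)\le d_\cF(q^\star,q_2)+d_\cF(q_1,q_2)\le 3\,d_\cF(q^\star,q_2)+\epsilon$.
\end{itemize}
Your route is self-contained and needs no extension of Bousquet et al.'s results beyond TV. It applies verbatim to $[0,1]$-valued classes, so it also covers the second bullet of the real-valued proposition. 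It uses only $O(\log(1/\delta)/\epsilon^2)$ samples, within the first term of the stated minimum. The paper's detour mainly ties the result to a stronger improper estimation guarantee that evaluation does not need.

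One caution about your heuristic sketch of the lower bound. Models with disjoint supports $A,B$ cannot give ratio near $3$. Picking the model whose support has more empirical mass already achieves factor about $2$: if $p(B)\le p(A)$, then $\TV(p,q_1)\le 1$ while $\TV(p,q_2)\ge 1-p(B)\ge 1/2$. Also, collision-free indistinguishability requires both hypotheses to have the same mean measure; being mirror images is not enough.

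A working instance keeps the models close in relative terms. Split a shattered $N$-set into halves $X,Y$. Let $q_1=(1+\gamma/2)/N$ on $X$ and $(1-\gamma/2)/N$ on $Y$, with $q_2$ the reverse, and let both hypotheses have uniform mean. Under $H_1$:
\begin{itemize}
\item on $X$, put $p$ at level $q_1$ on a random $1/(1+\gamma/2)$ fraction of points;
\item on $Y$, put $p$ at level $q_1$ everywhere, plus the leftover $\gamma/4$ mass on a vanishing random fraction.
\end{itemize}
This gives $\TV(p,q_1)\approx\gamma/4$ and $\TV(p,q_2)\approx3\gamma/4$, and $H_2$ is the mirror image. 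The ratio is $3-O(\gamma)$, so $\epsilon$ must be taken small relative to $\gamma$. Since you defer to Bousquet et al.'s parameterization, as the paper does, this does not affect correctness.
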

When the test class has finite VC dimension, we can obtain a $\epsilon$-accurate estimate of $\EEs{x\sim q^\star}{\phi(x)}$ for each $\phi\in \cF$ by uniform convergence.
In contrast, when the test class has unbounded VC dimension, strong evaluability is impossible, yet a weaker form of evaluability remains achievable. In other words, we can still have a coarse ranking of models even when estimating the IPM is not possible. 

Now, extending to bounded real-valued test classes (w.l.o.g mapping from $\cX$ to $[0,1]$), we replace VC dimension with the fat-shattering dimension.

\begin{definition}[$\gamma$-Fat-Shattering Dimension]
Given a class $\cF\subset [0,1]^\cX$ of real-valued functions that map from $\cX$ to $[0,1]$, we say that a subset $\{x_1,...,x_n\} \subseteq \cX$ is $\gamma$-shattered by $\cF$ if there exist thresholds $r_1,...,r_n\in [0,1]$ such that for all labeling vectors $y \in \{\pm 1\}^n$, there exists $g \in \cF$ such that $$y_i = +1 \implies g(x_i) \geq r_i + \gamma \hspace{0.5em},\hspace{0.5em} y_i = -1 \implies g(x_i) \leq r_i - \gamma.$$
The $\gamma$-fat-shattering-dimension of $\cF$, denoted as $\pd_\gamma(\cF)$, is the size of the largest set that can be $\gamma$-shattered by $\cF$.
\end{definition}

\begin{restatable}{proposition}{realeval} \label{res:realeval}
Given a function class $\cF\subset [0,1]^\cX$, the IPM $d_{\cF}$ satisfies:
\begin{itemize}
    \item If $\cF$ has finite $\gamma$-fat-shattering dimension $\pd_\gamma(\cF)< \infty$ for all $\gamma\in (0,\frac{1}{2})$, $d_{\cF}$ is estimable, thus strongly evaluable with sample complexity $$ O\left(\frac{1}{\epsilon^2}\left(d\log^2 \frac{4d}{\epsilon^2} + \log \frac{1}{\delta}\right) \right)\,,$$ where $d = \pd_{\epsilon/24}(\cF)$
    \item If there exists $\gamma\in (0,\frac{1}{2})$ such that $\cF$ has arbitrarily large $\gamma$-fat-shattering dimension (i.e., for any $n\in \NN$, there exists a set $X_n \subseteq \cX$ with $|X_n| = n$ such that $\cF$ can $\gamma$-fat-shatter $X_n$), $d_{\cF}$ is $3$-weakly evaluable. 
\end{itemize}
\end{restatable}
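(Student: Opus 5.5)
Both bullets will be handled separately.

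\emph{Finite fat-shattering case.} We take the score $s(q,\sev)=\sup_{\phi\in\cF}\abs{\frac1m\sum_{i=1}^m\phi(x_i)-\EEs{x\sim q}{\phi(x)}}$. Note that the model $q$ is known, so only the $q^\star$ side is random. By the triangle inequality for suprema,
\[
\abs{s(q,\sev)-d_\cF(q^\star,q)}\le \sup_{\phi\in\cF}\abs{\tfrac1m\textstyle\sum_i\phi(x_i)-\EEs{x\sim q^\star}{\phi(x)}}.
\]
It therefore suffices to have uniform convergence of empirical means over $\cF$ at accuracy $\epsilon$. For this we invoke the classical uniform Glivenko--Cantelli bound for classes of finite fat-shattering dimension \citep{alon,bartlettfatshattering}. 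Symmetrization reduces the deviation to a ghost sample of size $2m$. The class is then covered in empirical $\ell_\infty$ at scale about $\epsilon/c$, with covering number bounded via the Alon--Ben-David--Cesa-Bianchi--Haussler lemma by roughly $(4m/\gamma^2)^{d\log(2em/(d\gamma))}$, where $d=\pd_{\gamma}(\cF)$ and $\gamma=\epsilon/24$. Finally Hoeffding is applied on each cover element with a union bound. Solving for $m$ gives the stated $O(\epsilon^{-2}(d\log^2(4d/\epsilon^2)+\log(1/\delta)))$. Estimability then implies strong evaluability by the remark after the Estimability definition. The main bookkeeping is tracking the constants so that the scale is $\epsilon/24$, which arises from splitting $\epsilon$ between discretization and concentration; I would essentially quote the Alon et al.\ theorem rather than rederive it.

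\emph{Unbounded fat-shattering case.} Here I would prove $3$-weak evaluability for \emph{every} $\cF\subset[0,1]^\cX$, regardless of complexity, via a Yatracos/Scheffé-type tournament with two candidates. Given $q_1,q_2$, the quantity $\sup_\phi$ of the signed discrepancy is not computable from data. Instead we pick a single witness test $\phi^\star\in\cF$ (up to an $\epsilon/4$ slack in the supremum) that nearly maximizes $\abs{\EEs{q_1}{\phi}-\EEs{q_2}{\phi}}$, which is computable since both models are known. We estimate $\hat\mu=\frac1m\sum_i\phi^\star(x_i)$, which is within $\epsilon/4$ of $\EEs{q^\star}{\phi^\star}$ with probability $1-\delta$ when $m=O(\log(1/\delta)/\epsilon^2)$ by Hoeffding. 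We then output the $q_j$ whose $\EEs{q_j}{\phi^\star}$ is closer to $\hat\mu$.

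For the analysis, suppose $q_1$ is output. Then
\[
d_\cF(q_1,q_2)\le \abs{\EEs{q_1}{\phi^\star}-\EEs{q_2}{\phi^\star}}+\epsilon/4 .
\]
Write $a_j=\abs{\EEs{q_j}{\phi^\star}-\EEs{q^\star}{\phi^\star}}\le d_\cF(q_j,q^\star)$. The selection rule gives $a_1\le a_2+\epsilon/2$. Hence $d_\cF(q_1,q_2)\le a_1+a_2+\epsilon/4\le 2a_2+3\epsilon/4\le 2d_\cF(q_2,q^\star)+3\epsilon/4$. By the triangle inequality for the pseudometric $d_\cF$,
\[
d_\cF(q_1,q^\star)\le d_\cF(q_1,q_2)+d_\cF(q_2,q^\star)\le 3d_\cF(q_2,q^\star)+\epsilon .
\]
The symmetric case is identical. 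Tie-breaking is harmless, since the bound holds for either output when $a_1,a_2$ are within $\epsilon/2$.

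The only delicate points are measurability and existence of a near-maximizer, which are handled by the $\epsilon/4$ slack, and the need for the triangle inequality of $d_\cF$, which is immediate from the definition. I expect no real obstacle in this part. The main care is in the first bullet, in correctly importing the fat-shattering uniform convergence rate.
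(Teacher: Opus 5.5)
Your proposal is correct. For the first bullet you follow the paper's route. Your score $\sup_{\phi\in\cF}\abs{\frac1m\sum_i\phi(x_i)-\EEs{x\sim q}{\phi(x)}}$ is exactly $d_\cF(q,\hat q)$ for the empirical distribution $\hat q$. Its error is bounded by $d_\cF(q^\star,\hat q)$, which the paper also controls by citing \citet{alon}.

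For the second bullet your route is genuinely different. The paper reduces to the hypothesis-selection results of \citet{bousquet2019optimal}. It takes the estimator $\hat q$ of their Theorem 8, which satisfies $d_\cF(\hat q,q^\star)\le 2\min_{q'}d_\cF(q',q^\star)+\epsilon/2$, and ranks the candidates by $d_\cF(\hat q,\cdot)$. It then asserts that this TV-based argument transfers verbatim to any $[0,1]$-valued $\cF$.

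You instead run a single Scheff\'e-type match:
\begin{itemize}
\item fix a near-maximizing witness $\phi^\star\in\cF$ for $d_\cF(q_1,q_2)$, chosen from the two models alone, so one Hoeffding bound for a fixed bounded function suffices;
\item output the candidate whose mean under $\phi^\star$ is closer to the empirical mean;
\item finish with the triangle inequality for $d_\cF$.
\end{itemize}
Your chain $a_1\le a_2+\epsilon/2$, then $d_\cF(q_1,q_2)\le 2d_\cF(q_2,q^\star)+3\epsilon/4$, then $d_\cF(q_1,q^\star)\le 3d_\cF(q_2,q^\star)+3\epsilon/4$ is correct, and your handling of ties is right.

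What your approach buys: it is self-contained and elementary. It shows transparently that the factor $3$ holds for every bounded $\cF$, regardless of its fat-shattering dimension. It gives $m=O(\log(1/\delta)/\epsilon^2)$, matching the dominant term of the paper's bound. It also avoids the unelaborated step of carrying a TV-specific theorem over to general IPMs.

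What the paper's reduction buys: brevity, by reusing an existing black box. It also keeps the upper bound inside the same hypothesis-selection framework from which it draws the matching lower bound (no proper selection rule beats factor $3$). The cost is its reliance on that transfer.
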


Note that this result is weaker than \cref{res:binaryeval} since in the second case, we do not establish that $d_{\cF}(q^\star,q)$ is not $c'$-weakly evaluable for $c'<3$. We leave this as an open question.

\subsubsection{The Sample Complexity of Estimating IPMs}
In classical supervised learning, once a hypothesis class is learnable, its sample complexity is governed by a small number of canonical rates determined by a single complexity parameter, such as the VC dimension. Specifically, for every learnable class $\cH$, the optimal sample complexity to achieve error $\epsilon$ scales as $\vcd(\cH)/\epsilon$ in the realizable case, and $\vcd(\cH)/\epsilon^2$ in the agnostic case. 

For estimable IPMs, and evaluation metrics in general, it is a priori unclear whether an analogous such finite taxonomy of rates exists. One might hope that estimable metrics admit a finite number of sample-complexity behaviors, so that every estimable metric can be evaluated with sample complexity comparable to one of a fixed collection of functions of the desired accuracy. The following definition formalizes this notion.

\begin{definition}[Finite taxonomy of sample complexities]
    We say that a class of evaluation metrics $\mathcal{C}$ admits a \textit{finite taxonomy of sample complexities} if there exists a finite (or countable) list of functions $M_1(\epsilon,\delta), M_2(\epsilon,\delta), \ldots$ such that for every estimable metric $f \in \mathcal{C}$, there exists some index $i$ and an estimator with the property that for every pair of distributions $q_1, q_2 \in \cM$, the sample complexity of estimating $f(q_1, q_2)$ is at most $M_i(\epsilon,\delta)$ for all sufficiently small $\epsilon,\delta > 0$ (where the threshold for ``sufficiently small'' may depend on $q_1, q_2$).
\end{definition}

 Informally, each function $M_i$ represents a distinct sample complexity regime. If such a taxonomy existed, then although different metrics might require different constants or estimators, their asymptotic dependence on the accuracy parameter would be drawn from a fixed list. In this sense, the definition captures whether there is a VC-dimension–like parameter that captures the sample complexity of estimating (and thus strongly evaluating) evaluation metrics. The following result shows that in the case of estimable IPMs, no such parameter exists.

\begin{restatable}{theorem}{notaxonomy}\label{thm:no-taxonomy}
    There does not exist a finite taxonomy of sample complexities for estimable IPMs. More precisely, for any sequence of functions $M_1(\epsilon,\delta), M_2(\epsilon,\delta), \ldots$ (finite or countable), and for any $i \in \NN$, there exists an estimable IPM $d_{\cF}$ (with test function class $\cF$) and distributions $q_1, q_2 \in \cM$ such that the estimation sample complexity for $d_{\cF}(q_1, q_2)$ is greater than $M_i(\epsilon,\delta)$ for all sufficiently small $\epsilon,\delta > 0$.
\end{restatable}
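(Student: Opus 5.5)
The plan is a diagonalization over the countable list $M_1,M_2,\ldots$: build a single real-valued test class $\cF\subset[0,1]^\cX$ whose fat-shattering dimension $\pd_\gamma(\cF)$ is finite for every $\gamma>0$ but grows arbitrarily fast as $\gamma\to 0$. By \cref{res:realeval}, $d_\cF$ is then estimable. We then use a fat-shattering lower bound to show that estimating $d_\cF$ to accuracy $\epsilon$ needs a number of samples that eventually exceeds any prescribed $M_i$.

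\textbf{Construction.} Take $\cX=\NN$ and partition it into disjoint finite blocks $B_1,B_2,\ldots$ with $|B_k|=n_k$. Fix scales $\gamma_k=2^{-k}$. Let $\cF$ consist of all functions that equal $1/2$ off a single block $B_k$ and, on $B_k$, take arbitrary values in $\{1/2-\gamma_k,\,1/2+\gamma_k\}$.

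\textbf{Finite fat-shattering dimension.} A set that is $\gamma$-shattered with $\gamma>\gamma_k$ must avoid every block $B_j$ with $j\ge k$. A shattered set also cannot straddle two blocks, since no single function deviates from $1/2$ on two blocks simultaneously. Hence $\pd_\gamma(\cF)\le \max_{j<k} n_j<\infty$ for each $\gamma>0$. Since $d_\cF$ is then estimable, it is also strongly evaluable, and the sample complexity is finite for every fixed $\epsilon$. The growth of $n_k$, however, is ours to choose.

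\textbf{Lower bound.} Note that $d_\cF(q_1,q_2)=\sup_k \gamma_k\,\TV(q_1|_{B_k},q_2|_{B_k})\cdot 2$, taken over restricted masses. For the estimation problem, take $q_1$ fixed and, for $q_2$ (or rather the sampled distribution), use the standard two-point/Le Cam argument on block $B_k$. Two distributions are hard to tell apart from $m\ll n_k$ samples: (a) uniform on $B_k$, and (b) uniform on a random half of $B_k$. With $\epsilon\approx\gamma_k/4$, these two have $d_\cF$ values to a fixed reference differing by order $\gamma_k$. Birthday-paradox/collision arguments, as in the TV lower bound of \cref{res:binaryeval} and \citet{bousquet2019optimal}, show that fewer than about $\sqrt{n_k}$ samples cannot distinguish them with constant probability. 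This gives sample complexity at least $\Omega(\sqrt{n_k})$ at accuracy $\epsilon_k\asymp\gamma_k$.

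\textbf{Diagonalization.} Choose $n_k$ so that $\sqrt{n_k}>\max_{i\le k} M_i(\gamma_k,\delta)$ for $\delta$ in the relevant range. Monotonicity in $\epsilon$ handles intermediate $\epsilon\in[\gamma_{k+1},\gamma_k]$: take $n_k$ exceeding the maximum of $M_i$ over that interval. Then for each fixed $i$ and every $k\ge i$, the requirement exceeds $M_i$ at scale $\gamma_k$, so it exceeds $M_i$ for all sufficiently small $\epsilon$.

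\textbf{Main obstacle.} The main obstacle is reconciling the quantifiers in the definition: the statement asks for \emph{fixed} distributions $q_1,q_2$ with "sufficiently small $\epsilon$" depending on them. A fixed pair cannot be hard at all scales unless it places mass on infinitely many blocks. I would take $q_2$ to be a mixture $\sum_k w_k\,\mathrm{Unif}(B_k)$ with summable weights and $q_1$ a perturbation on every block. I would then absorb the weights $w_k$ into the choice of $n_k$, making $n_k$ large enough to beat $M_i(\gamma_k w_k,\delta)/w_k^2$. Alternatively, if the definition is read minimax-style, the block-wise two-point argument above suffices directly.
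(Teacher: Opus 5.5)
Your core argument is the paper's. It uses disjoint finite blocks that carry all sign patterns at a vanishing scale: the paper uses $\{0,1/k\}$-valued scaled indicators, you use $1/2\pm\gamma_k$. It then gets $\pd_\gamma(\cF)<\infty$ for each fixed $\gamma$ because only finitely many blocks have range at least $2\gamma$, and estimability from \cref{res:realeval}. Finally it proves a per-block lower bound that grows with block size and chooses block sizes to beat $\max_{j\le k}M_j$ at the $k$-th scale.

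What differs is the lower-bound ingredient. The paper invokes a standard VC-type bound $c(N_k+\log(1/\delta))/(k\epsilon)^2$. You prove $\Omega(\sqrt{n_k})$ directly by Le Cam: the reference model is $\mathrm{Unif}(B_k)$, and the ground truth is either $\mathrm{Unif}(B_k)$ (value $0$) or uniform on a random half (value $\gamma_k$). Their $m$-sample laws are within $O(m^2/n_k)$ in $\TV$. Your bound is weaker in $n_k$ and in $\delta$, but it is self-contained and addresses exactly the scalar estimation task. Since $n_k$ is yours to choose, the weaker rate costs nothing.

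Your normalization $\epsilon_k\approx\gamma_k/4$ also keeps the rescaled accuracy a small constant. The paper's $\epsilon_k=1/k$ at scale $1/k$ makes the rescaled accuracy $1$, where a block-$k$ bound is vacuous until a small constant is inserted. One correction: compare against $M_j(\gamma_k/4,\delta_k)$ with a fixed $\delta_k\le 1/4$, not $M_j(\gamma_k,\delta)$, because at accuracy $\gamma_k$ your block-$k$ instance is not hard.

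You should drop the two strengthenings. First, taking $n_k$ above the maximum of $M_i$ over $[\gamma_{k+1},\gamma_k]$, or over ``$\delta$ in the relevant range,'' fails for arbitrary $M_i$, whose supremum there may be infinite. In fact no construction can give ``exceeds $M_i$ for all sufficiently small $\epsilon,\delta$'' for every list. By \cref{res:realeval}, the complexity at any fixed $\epsilon$ is $O_\epsilon(\log(1/\delta))$, which $M_i=\lceil 1/\delta\rceil$ eventually exceeds. The strengthening is also unnecessary. The taxonomy asks for complexity at most $M_i$ at all sufficiently small parameters, so exceeding $M_i(\epsilon_k,\delta_k)$ for all $k\ge i$ along one sequence $(\epsilon_k,\delta_k)\to 0$ already refutes it. That is exactly what the paper's proof establishes.

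Second, skip the fixed-pair mixture. A lower bound at one fixed pair that holds against every valid estimator cannot grow as $\epsilon\to 0$. Consider a Hodges-type estimator that snaps to $d_\cF(q_1,q_2)$ whenever a standard estimate lies within a slowly shrinking tolerance of it. It remains uniformly valid, yet it is exact at that pair with high probability. The minimax reading, which is your fallback, is the one the paper's proof actually uses.
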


\subsection{Fixed Statistical Tests}\label{sec:fixedtest}
The second class of test-based evaluation metrics we consider are fixed statistical tests. 
\begin{definition}[Fixed Statistical Test]
Given any test function $g:\cX\mapsto \R$, we define the fixed statistical test w.r.t $g$ as $$f_{\text{fixed}}^g (q, q^\star) = |\EEs{x\sim q^\star}{g(x)} - \EEs{x\sim q}{g(x)}|.$$
\end{definition}
Here, we admit a single test function as opposed to a larger class, but we do not place any boundedness assumptions on $g$. The evaluability guarantees on $f_{\text{fixed}}^g$ depend on the boundedness of~$g$.

\begin{restatable}{theorem}{fixedtest} \label{res:fixedtest}
Given a test function $g:\cX\mapsto \R$, 
\begin{itemize}
    \item If there exists a fixed $B<\infty$ such that $|g(x)| \leq B$ for all $x \in \cX$, then $f_{\text{fixed}}^g$ is estimable, thus strongly evaluable with sample complexity $$ O\left(\frac{B^2\log \frac{1}{\delta}}{\epsilon^2}\right)\,.$$
    \item If there exists $x_2 \in \cX$ with $0 < |g(x_2)| < \infty$ such that for every $B>0$, there exists $x_1 \in \cX$ with $|g(x_1)| > B|g(x_2)|$, then $f_{\text{fixed}}^g$ is not weakly evaluable.
\end{itemize}
\end{restatable}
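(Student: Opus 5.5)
For the first bullet I will use the score function $s(q,\sev) = \bigl|\frac{1}{m}\sum_{i=1}^m g(x_i) - \EEs{x\sim q}{g(x)}\bigr|$. It is well defined and finite because $|g|\le B$. Since each $g(x_i)\in[-B,B]$, Hoeffding's inequality gives $\bigl|\frac1m\sum_i g(x_i)-\EEs{x\sim q^\star}{g(x)}\bigr|\le\epsilon$ with probability $1-\delta$ once $m\ge 2B^2\log(2/\delta)/\epsilon^2$. The reverse triangle inequality then gives $|s(q,\sev)-f^g_{\text{fixed}}(q,q^\star)|\le\epsilon$ for every $q$ simultaneously, because the randomness enters only through the empirical mean. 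This proves estimability, and strong evaluability follows from the remark after the definition of estimability.

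For the second bullet I will argue by contradiction. Suppose $f^g_{\text{fixed}}$ is $c$-weakly evaluable with algorithm $\cA$ and sample size $m=m_{\text{evl}}(\epsilon,\delta,c)$, where I fix small constants such as $\delta=1/4$ and $\epsilon$ small relative to $|g(x_2)|$. I will build two ground truths that a sample of size $m$ cannot distinguish, but that demand opposite answers on the same pair $\{q_1,q_2\}$.

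Take $q^\star_A=\delta_{x_2}$, and let $q^\star_B$ be the mixture with mass $1-\eta$ on $x_2$ and mass $\eta$ on some $x_1$. I choose $\eta$ with $(1-\eta)^m\ge 1-1/8$, so that with probability at least $7/8$ the $m$ samples from $q^\star_B$ are all equal to $x_2$, exactly as they would be under $q^\star_A$. The hypothesis then lets me pick $x_1$ with $|g(x_1)|>B|g(x_2)|$ for $B$ as large as I like, so that $\eta|g(x_1)|$ is large. This gives $\EEs{q^\star_A}{g}=g(x_2)$, while $\EEs{q^\star_B}{g}=(1-\eta)g(x_2)+\eta g(x_1)$ is far from $g(x_2)$.

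I then set $q_1=\delta_{x_2}$, so that $f(q_1,q^\star_A)=0$. For $q_2$ I take a distribution whose $g$-mean equals $\EEs{q^\star_B}{g}$; the simplest choice is $q_2=q^\star_B$ itself, giving $f(q_2,q^\star_B)=0$. Then $f(q_2,q^\star_A)=f(q_1,q^\star_B)=\eta|g(x_1)-g(x_2)|=:D$, which exceeds $\epsilon$ once $B$ is large. Under $q^\star_A$, correctness forces the output to be $q_1$ (outputting $q_2$ would require $D\le c\cdot 0+\epsilon$), and this holds with probability at least $3/4$. Under $q^\star_B$, correctness forces the output to be $q_2$, again with probability at least $3/4$.

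The two runs see the same input with probability at least $7/8$: the sample is all $x_2$, and the algorithm's randomness can be coupled. On that event the output is the same under both ground truths. Combining the three probabilities, $3/4+3/4+7/8>2$, which is a contradiction. Because this holds for every $c$, $f^g_{\text{fixed}}$ is not weakly evaluable.

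The main obstacle is the quantifier bookkeeping. The value $m$ is fixed first, then $\eta$ is chosen depending on $m$, and only then $x_1$ is chosen depending on $\eta$, so that $\eta|g(x_1)|$ dominates both $\epsilon$ and $\eta|g(x_2)|$. The "for every $B$" hypothesis is exactly what permits this last choice. The ratio condition relative to $|g(x_2)|$, together with $g(x_2)\ne 0$, ensures $|g(x_1)-g(x_2)|\ge (B-1)|g(x_2)|$, so $D\ge\eta(B-1)|g(x_2)|$ can be made to exceed $\epsilon$.
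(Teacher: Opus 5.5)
Your proposal is correct and follows essentially the same route as the paper. The first bullet uses the empirical-mean score, the reverse triangle inequality and Hoeffding. The second bullet uses the same two-point construction: a point mass at $x_2$, and a perturbation of it that puts small mass on a large-$|g|$ point $x_1$, with both distributions serving as candidates and as ground truths. The differences are cosmetic: you choose $\eta$ from $m$ before choosing $x_1$, whereas the paper ties both to the single parameter $B$ via mass $1/\sqrt{B}$, and you argue indistinguishability by coupling on the all-$x_2$ event rather than via $\TV(q_1^m,q_2^m)\le m\,\TV(q_1,q_2)$.
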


\section{R\'enyi divergences} \label{sec:similarity}
In this section, we turn to studying evaluation metrics that directly compare distributions. In particular, we focus on the canonical and representative R\'enyi divergences, introduced in \citet{Rnyi1961OnMO}. These metrics are a unifying family that includes the KL divergence as a special case. Unlike test-based metrics, these divergences compare distributions pointwise using likelihood ratios. R\'enyi divergences offer a more direct notion of distributional fidelity as opposed to test-based metrics and have important applications in statistics, information theory, and machine learning \citep{renyi1,renyi2,huang2024renyidivergencedeepmutual}. A central artifact of these divergences that contrasts with test-based metrics is that they can be critically determined by rare points and thus become unbounded. Our results show that this property is fundamentally incompatible with evaluability under our framework, and it cannot be easily fixed by truncation.

\subsection{R\'enyi divergences}
The main class of metrics we focus on is the $\alpha$-R\'enyi divergences. 

\begin{definition}[R\'enyi divergence]
Given $\alpha > 1$, the $\alpha$-R\'enyi divergence is defined as $$f_{\Renyi}(q,q^\star) = \frac{1}{\alpha-1}\log\sum_{x \in \cX}q^\star(x)^\alpha q(x)^{1-\alpha}.$$
\end{definition}
Our main result here is that the $\alpha$-R\'enyi divergence is not weakly evaluable for any $\alpha>1$.

\begin{restatable}{theorem}{renyi}\label{res:renyi}
For any $\alpha > 1$, $f_{\Renyi}$ is not weakly evaluable.
\end{restatable}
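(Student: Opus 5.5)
We argue by contradiction with an indistinguishability construction. Fix $\alpha>1$, $c\ge 1$, and suppose $f_{\Renyi}$ is $c$-weakly evaluable via an algorithm $\cA$ with sample complexity $m_{\text{evl},f}$. Choose $\epsilon=\delta=1/4$ and let $m=m_{\text{evl},f}(1/4,1/4,c)$. We will build two candidate models $q_1,q_2$ and two ground truths $q^\star_A,q^\star_B$ with two properties. First, the data distributions $(q^\star_A)^m$ and $(q^\star_B)^m$ are within total variation $1/4$ of each other. Second, weak evaluability forces $\cA$ to output $q_1$ with probability at least $3/4$ under $q^\star_A$, and to output $q_2$ with probability at least $3/4$ under $q^\star_B$. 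These are incompatible: the probability that $\cA$ outputs $q_1$ can change by at most the total variation between the sample distributions, while here it would have to drop from at least $3/4$ to at most $1/4$, a change of at least $1/2$.

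\textbf{Construction.} Take three points $a,b,z\in\cX$. Let the ground truths place mass mostly on $a$, and let $q^\star_B$ place a tiny mass $\eta$ on a rare point $z$.
\begin{itemize}
\item $q^\star_A=\delta_a$.
\item $q^\star_B=(1-\eta)\delta_a+\eta\,\delta_z$, with $\eta$ chosen so that $m\eta\le 1/4$. Then $\TV\big((q^\star_A)^m,(q^\star_B)^m\big)\le m\eta\le 1/4$.
\end{itemize}
The candidates are:
\begin{itemize}
\item $q_1=(1-\tau)\delta_a+\tau\,\delta_z$, where $\tau$ is chosen astronomically small so that $q_1$ nearly ignores $z$.
\item $q_2=\tfrac12\delta_a+\tfrac12\delta_z$, or more generally any model putting moderate mass on both $a$ and $z$.
\end{itemize}

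\textbf{Divergences.} Under $q^\star_A$ we have $f(q_1,q^\star_A)=-\log(1-\tau)\approx 0$, while $f(q_2,q^\star_A)=\log 2$. Weak evaluability then requires that whenever $\cA$ outputs $q_2$, we have $\log 2\le c\cdot(\text{tiny})+1/4$. This is false for $\tau$ small, so $\cA$ must output $q_1$ with probability at least $3/4$.

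Under $q^\star_B$, the sum $\sum_x q^\star_B(x)^\alpha q_1(x)^{1-\alpha}$ contains the term $\eta^\alpha\tau^{1-\alpha}$. Since $\alpha>1$, sending $\tau\to 0$ with $\eta$ fixed makes this term, and hence $f(q_1,q^\star_B)$, arbitrarily large. Meanwhile
\[
f(q_2,q^\star_B)\le \tfrac{1}{\alpha-1}\log\big(2^{\alpha-1}\big((1-\eta)^\alpha+\eta^\alpha\big)\big)\le \log 2 .
\]
We therefore choose $\tau$ small enough that $f(q_1,q^\star_B)>c\log 2+1/4$. Then weak evaluability forbids outputting $q_1$ under $q^\star_B$, so $\cA$ outputs $q_2$ with probability at least $3/4$. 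This gives the contradiction described above.

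\textbf{Main obstacle.} The delicate step is the order of quantifiers. $m$ depends only on $(\epsilon,\delta,c)$; $\eta$ is then chosen from $m$; and $\tau$ is chosen last, from $\eta$ and $c$. Doing it in this order makes $f(q_1,q^\star_A)$ small and $f(q_1,q^\star_B)$ large at the same time, since the first depends only on $\tau$ and the second blows up as $\tau\to 0$ for the fixed $\eta$.

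A second point to verify is that $\cA$ may be randomized. This is handled by noting that for any randomized decision rule, the output probabilities under two sample distributions differ by at most their total variation, which is exactly the bound used in the contradiction.
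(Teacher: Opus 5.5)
Your proof is correct, but it uses a different construction from the paper's. The paper's construction is realizable: the ground truth is one of the two candidates. Concretely, $q_1$ and $q_2$ share the mass $1-\eta-\eta e^{-M}$ on a common point $x_0$ and swap the masses $\eta$ and $\eta e^{-M}$ on two rare points. Whichever candidate is the truth then has divergence exactly $0$, while the other has divergence at least $M/2$. Since $c\cdot 0=0$, the multiplicative factor plays no role. The paper then runs a Bayesian argument: put the truth uniformly on $\{q_1,q_2\}$; with probability at least $1-2\eta m$ the sample contains only $x_0$, so any algorithm misranks with probability near $1/2$.

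You instead use ground truths $\delta_a$ and its $\eta$-perturbation toward $z$, with candidates distinct from both. You neutralize $c$ through the single parameter $\tau$: as $\tau\to 0$, $f(q_1,q^\star_A)=-\log(1-\tau)\to 0$ and $f(q_1,q^\star_B)\to\infty$, while $f(q_2,\cdot)\le \log 2$ stays fixed. This is essentially the template the paper uses for the coverage-profile lower bound (\cref{res:coveragenegative}) rather than for its R\'enyi proof.

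What each route buys:
\begin{itemize}
\item The paper's route gives a slightly stronger negative statement, since evaluation fails even when one candidate equals $q^\star$. It also needs only one symmetric computation, because $f(q_1,q_2)=f(q_2,q_1)$.
\item Your route needs only a two-point support and makes the quantifier order $m\to\eta\to\tau$ explicit. Its total-variation contradiction also transparently covers randomized algorithms.
\end{itemize}

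A minor point: the point $b$ is never used and can be dropped. If you keep it, you need the usual convention that terms with $q^\star(x)=0$ contribute zero, which the paper's definition tacitly assumes as well.
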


\paragraph{Proof Sketch} Consider the following construction of $q_1,q_2$.
Given any $\eta \in (0,1)$ and $M>0$, define
\begin{equation*}
    q_1(x) = \begin{cases}
        1-\eta -\eta e^{-M} & x=x_0\\
        \eta e^{-M} & x=x_1\\
        \eta & x=x_2\,,
    \end{cases}
\end{equation*}
and 
\begin{equation*}
    q_2(x) = \begin{cases}
        1-\eta -\eta e^{-M} & x=x_0\\
        \eta & x=x_1\\
        \eta e^{-M} & x=x_2\,.
    \end{cases}
\end{equation*}
In this construction, $q_1$ and $q_2$ agree on a majority of the mass placed on $x_0$, but are flipped on $x_1, x_2$. The $\alpha$-R\'enyi divergence $f_{\Renyi}(q_1,q_2)=f_{\Renyi}(q_2,q_1)$ is large when $M$ is large. When $\eta$ is small enough, $x_1$ and $x_2$ will never be observed in finite samples, no matter whether the ground-truth $q^\star$ is $q_1$ or $q_2$.

Note that KL divergence $f_{\KL}(q,q^\star): = \KL(q^\star\|q)$ is the limit of the $\alpha$-R\'enyi divergence as $\alpha \rightarrow 1$. The same construction in the proof of \cref{res:renyi} gives us the following corollary. 

\begin{restatable}{corollary}{KLres} \label{res:KL}
$f_{\KL}$ is not weakly evaluable.
\end{restatable}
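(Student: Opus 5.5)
We follow the construction from the proof sketch of \cref{res:renyi}, using the same two models $q_1,q_2$ with parameters $\eta\in(0,1)$ and $M>0$, and take the ground truth to be one of them. The argument is by contradiction. Suppose $f_{\KL}$ is $c$-weakly evaluable for some $c\ge 1$, with sample complexity $m=m_{\text{evl},f_{\KL}}(\epsilon,\delta,c)$ for a fixed $\epsilon\in(0,1)$ and $\delta<1/4$, and some algorithm $\cA$. We exhibit a pair of scenarios that $\cA$ cannot distinguish: in one, outputting $q_2$ violates the guarantee, and in the other, outputting $q_1$ does.

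\textbf{Step 1 (divergence values).} If $q^\star=q_1$, then $f_{\KL}(q_1,q^\star)=0$. Moreover,
\[
f_{\KL}(q_2,q_1)=\KL(q_1\|q_2)=\eta e^{-M}\log\frac{\eta e^{-M}}{\eta}+\eta\log\frac{\eta}{\eta e^{-M}}=\eta M(1-e^{-M}),
\]
since the $x_0$ terms cancel. By symmetry, $\KL(q_2\|q_1)$ takes the same value. We choose $M$ so that $\eta M(1-e^{-M})>\epsilon$ (for example $M=2\epsilon/\eta$ plus a constant). Then in world $q^\star=q_1$, outputting $q_2$ violates the inequality $f(q_2,q^\star)\le c\cdot 0+\epsilon$, and symmetrically in world $q^\star=q_2$. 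Note that $c$ plays no role here because the other side of the inequality is $0$. This is why the statement is ``not weakly evaluable'' for every $c$.

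\textbf{Step 2 (indistinguishability).} Let $E$ be the event that all $m$ samples equal $x_0$. Under either ground truth, $\Pr[E]=(1-\eta-\eta e^{-M})^m\ge(1-2\eta)^m$. We pick $\eta$ small enough, depending on $m$ (which is fixed once $\epsilon,\delta,c$ are fixed), so that this probability exceeds $1-\delta'$ for a small $\delta'$. On $E$ the sample is the same deterministic multiset in both worlds, so $\cA(\{q_1,q_2\},S)$ has the same output distribution in both worlds. Let $p$ be the probability that it outputs $q_2$ on $E$. Then in world $q_1$ the failure probability is at least $p\Pr[E]$, and in world $q_2$ it is at least $(1-p)\Pr[E]$. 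One of these is at least $\Pr[E]/2>\delta$ once $\Pr[E]>2\delta$, which is the contradiction. Allowing $\cA$ to be randomized is harmless, since we condition only on the data.

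\textbf{Main obstacle.} The main difficulty is the order of quantifiers. The sample size $m$ must be fixed first, from $\epsilon,\delta,c$, and only then are $\eta$ and $M$ chosen, since the definition allows the bad pair to depend on $m$. We take $\eta\to0$ to make $E$ likely and $M\approx 2\epsilon/\eta$ to keep the KL gap above $\epsilon$. We must also check that $q_1(x_0)=1-\eta-\eta e^{-M}>0$, which holds for $\eta<1/2$. This is simply the $\alpha\to1$ analogue of the \cref{res:renyi} argument, with the Rényi value replaced by the explicit KL computation above.
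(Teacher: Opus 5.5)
Your proposal is correct and follows essentially the same route as the paper. Both use the same three-point construction from the R\'enyi proof, the same computation $\KL(q_1\|q_2)=\KL(q_2\|q_1)=\eta M(1-e^{-M})$, and the same indistinguishability argument via the event that every sample equals $x_0$; the paper phrases this as choosing $q^\star$ uniformly at random, which is your $p$ versus $1-p$ averaging, and your explicit quantifier order (fixing $m=m_{\text{evl},f_{\KL}}(\epsilon,\delta,c)$ before choosing $\eta$ and then $M$) is, if anything, cleaner than the paper's write-up.
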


These two divergences share a common pathology that our construction exploits --- their value can become unbounded if a model places negligible mass compared to the ground truth on even a single point. Additionally, these metrics are inherently one-sided, whereas the test-based metrics we discussed are symmetric. More specifically, the value of the $\alpha$-R\'enyi divergence or KL divergence can only become unbounded through points where $q^\star$ is nonzero but $q$ is vanishingly small. In this sense, these divergences penalize undercoverage of the ground-truth distribution rather than generic mismatch. We believe that our negative results for the $\alpha$-R\'enyi and KL divergences extend to a broader subclass of tail-sensitive $f$-divergences. However, we know that this does not hold in general for all $f$-divergences, as \cref{res:binaryeval} shows that the TV distance is 3-weakly evaluable. It would be interesting to explore the characterization of evaluability for general $f$-divergences in future work.

\subsection{The Coverage Profile}
A direct fix of the downfalls of the $\alpha$-R\'enyi divergence is to ``truncate'' the metric in some way. This is motivated by the fact that evaluation should focus on the bulk of the data rather than being destabilized by rare events. Here we show that this simple fix does not immediately lead to evaluability in general, as evidenced by the following metric: the coverage profile, introduced in \citet{chen2025coverage}.

\begin{definition}[Coverage Profile]
Given a natural number $N \geq 1$, the coverage profile w.r.t. $N$ is defined as 
$$f_{\cov}(q,q^\star) = \mathbb{P}_{x \sim q^\star}\left[\frac{q^\star(x)}{q(x)} \geq N \right]\,.$$
\end{definition} 

The coverage profile captures a model's ability to sufficiently \textit{cover} rare data points under the ground truth model. However, unlike the $\alpha$-R\'enyi and KL divergences, it penalizes models uniformly rather than exponentially. Concretely, for a fixed point $x$ and two models $q$ and $q'$ such that $N\leq\frac{ q^\star(x)}{q(x)} \leq N + \eta$ for small $\eta > 0$ and $\frac{q^\star(x)}{q'(x)} \gg N$, $q$ and~$q'$ are penalized equally for this particular point under the coverage profile even though $q'$ places far less mass on $x$. This is an overcorrection to the issue that appears with the earlier metrics, and we show that $f_{\cov}$ is not evaluable.

\begin{restatable}{proposition}{coveragenegative}\label{res:coveragenegative}
For any $N \geq 2$, $f_{\cov}$ is not weakly evaluable. 
\end{restatable}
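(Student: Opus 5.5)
The argument is an indistinguishability construction in the spirit of the proof sketch for \cref{res:renyi}. We exhibit two ground truths $q^\star_1, q^\star_2$ and two candidate models $q_1, q_2$ with the following three properties:
\begin{itemize}
\item[(i)] Under $q^\star_1$ we have $f_{\cov}(q_1,q^\star_1)=0$ and $f_{\cov}(q_2,q^\star_1)\ge \eta$.
\item[(ii)] Under $q^\star_2$ the roles are reversed.
\item[(iii)] For any fixed sample size $m$, the distributions of $m$ IID samples from $q^\star_1$ and from $q^\star_2$ have total variation at most some $\tau$ that we can make small.
\end{itemize}
Any algorithm $\cA$ then outputs $q_2$ under $q^\star_1$ with essentially the same probability as under $q^\star_2$. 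Since $f_{\cov}(q_1,q^\star_1)=0$, outputting $q_2$ under $q^\star_1$ violates $f_{\cov}(q_2,q^\star_1)\le c\cdot 0+\epsilon$ whenever $\epsilon<\eta$. Symmetrically, outputting $q_1$ under $q^\star_2$ is a violation. The two failure probabilities sum to at least $1-\tau$, so one of them exceeds $\delta$ for small $\delta$. This holds for every $c\ge 1$ and every $m$, because the multiplicative factor is useless against a zero baseline.

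\textbf{Step 1 (construction).} Take $\cX\supseteq\{x_0,x_1,x_2\}$ and a small parameter $\eta$. Both ground truths place mass $1-2\eta$ on $x_0$. They differ only through a near-zero perturbation, not through where the tail mass sits, since the tail points must carry mass $\eta$ to make the metric gap $\ge\eta$. The key point is that $f_{\cov}$ depends on the ratio $q^\star(x)/q(x)$ crossing the threshold $N$. So we set $q^\star_1(x_1)=\eta$ and $q^\star_2(x_1)=\eta(1+\gamma)$, with the compensating change on $x_2$, where $\gamma$ is tiny. The models are chosen so that $q_2(x_1)=\eta/(N-\text{slack})$ sits just on one side of the threshold under $q^\star_1$ and crosses it under $q^\star_2$, and $q_1$ is arranged symmetrically on $x_2$. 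Both models also put mass on $x_0$ comparable to $1-2\eta$, so $x_0$ never contributes.

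Concretely, we want the following (using $N\ge 2$ so these masses fit in a probability vector):
\begin{itemize}
\item Under $q^\star_1$: $q_1$ has ratios $<N$ everywhere, and $q_2$ has ratio $\ge N$ at $x_1$.
\item Under $q^\star_2$: $q_2$ has all ratios $<N$, and $q_1$ has ratio $\ge N$ at $x_2$.
\end{itemize}
We achieve this with
\[
q_1(x_1)=\frac{\eta}{N}\quad\text{(just barely fine for }\eta\text{, ratio equal to }N\text{ is bad, so instead use }q_1(x_1)=\tfrac{\eta(1+\gamma)}{N}\text{)},
\]
and adjust analogously on $x_2$.

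\textbf{Step 2 (indistinguishability).} The per-sample TV between $q^\star_1$ and $q^\star_2$ is $O(\eta\gamma)$, so the $m$-sample TV is at most $m\eta\gamma$. Given $m$, we choose $\gamma\ll 1/(m\eta)$. Since the metric gap $\eta$ does not depend on $\gamma$, we fix $\epsilon<\eta$ first and then send $\gamma\to0$. This is exactly why the threshold discontinuity of $f_{\cov}$ is fatal: an arbitrarily small change in $q^\star$ flips the metric by a constant.

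\textbf{Main obstacle.} The delicate part is satisfying all the threshold inequalities simultaneously with valid probability vectors. In particular, both models must stay below ratio $N$ on $x_0$ and on their "good" point under both ground truths, and strict versus non-strict inequality at $N$ has to be handled carefully. I would first try the symmetric ansatz $q_1(x_1)=q_2(x_2)=\eta(1+\gamma)/N$ and $q_1(x_2)=q_2(x_1)=\eta$-ish, put the remaining mass on $x_0$, and verify the inequalities directly. If symmetry causes trouble, I would fall back to placing $x_1$ and $x_2$ at different masses.
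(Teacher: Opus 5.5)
Your overall plan is sound: a two-point indistinguishability argument in which a perturbation of total variation $\eta\gamma$ pushes a likelihood ratio across the threshold $N$, with a zero baseline in each world so that $c$ plays no role. The gap is that the construction, which is the entire content of the proof, is never pinned down consistently, and the one concrete instance you propose fails.

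The missing observation is a monotonicity constraint. Since each model $q$ is the same in both worlds, $q^\star(x)/q(x)$ moves in the same direction as $q^\star(x)$. At $x_1$, whose mass rises from $\eta$ to $\eta(1+\gamma)$, a model can therefore only cross from ratio $<N$ to ratio $\ge N$. At $x_2$, whose mass falls to $\eta(1-\gamma)$, it can only cross from $\ge N$ to $<N$. This has three consequences for your write-up:
\begin{enumerate}
\item Your bullet list asks $q_2$ to have ratio $\ge N$ at $x_1$ under $q^\star_1$ and all ratios $<N$ under $q^\star_2$. This is infeasible, since the ratio at $x_1$ under $q^\star_2$ is $(1+\gamma)$ times its value under $q^\star_1$.
\item Your Step 1 prose, with $q_2(x_1)=\eta/(N-\text{slack})$, makes $q_2$ turn bad under $q^\star_2$, which is the reverse of your requirement (ii).
\item The symmetric ansatz $q_1(x_1)=q_2(x_2)=\eta(1+\gamma)/N$, $q_1(x_2)=q_2(x_1)=\eta$ gives, under $q^\star_1$, ratios $1$ and $N/(1+\gamma)<N$ for $q_2$ at $x_1,x_2$. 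Hence $f_{\cov}(q_1,q^\star_1)=f_{\cov}(q_2,q^\star_1)=0$. There is a gap only under $q^\star_2$, so the algorithm that always outputs $q_2$ is valid in both worlds.
\end{enumerate}
As written, the argument does not go through.

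The repair is to break the symmetry in the models to match the antisymmetric perturbation. Calibrate the model that must turn bad to the \emph{perturbed} mass at the rising point, and the model that must turn good to the \emph{unperturbed} mass at the falling point. For instance, on $(x_0,x_1,x_2)$ take, with $0<\gamma<1$:
\begin{itemize}
\item $q^\star_1=(1-2\eta,\ \eta,\ \eta)$ and $q^\star_2=(1-2\eta,\ \eta(1+\gamma),\ \eta(1-\gamma))$;
\item $q_1=(1-\eta-\eta(1+\gamma)/N,\ \eta(1+\gamma)/N,\ \eta)$ and $q_2=(1-\eta-\eta/N,\ \eta,\ \eta/N)$.
\end{itemize}
Then $f_{\cov}(q_1,q^\star_1)=0$, $f_{\cov}(q_2,q^\star_1)=\eta$, $f_{\cov}(q_1,q^\star_2)=\eta(1+\gamma)$ and $f_{\cov}(q_2,q^\star_2)=0$. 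Here $N\ge 2$ is used to get $1+\gamma<N$ for $q_2$ at $x_1$, and to keep the $x_0$-ratios at most $1$.

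Now fix $\epsilon=\eta/2$, $\delta=1/4$, $m=m_{\mathrm{evl},f_{\cov}}(\epsilon,\delta,c)$, and then choose $\gamma<1/(4m\eta)$. Your Step 2 finishes the proof. That $q_1$ depends on $\gamma$ is harmless, since $m$ must work for every pair of models.

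The repaired version differs only mildly from the paper's proof. The paper uses two points and a ground truth $q_3$ at which \emph{both} models sit exactly on the threshold. Its baseline there is $\gamma>0$, so it must take $\gamma$ small relative to $c$ to ensure $1-\gamma>c\gamma+\epsilon$. Your zero-baseline version removes any dependence on $c$ at the cost of a third point.
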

\paragraph{Proof Construction}
For some \(\gamma>0\) and \(0<\eta\ll\gamma\), define candidate models
\[
q_1(x_0)=\frac{1-\gamma}{N},\qquad
q_1(x_1)=1-\frac{1-\gamma}{N},
\]
and
\[
q_2(x_0)=1-\frac{\gamma}{N},\qquad
q_2(x_1)=\frac{\gamma}{N}.
\]
Define two possible ground truths
\[
q_3(x_0)=1-\gamma,\qquad q_3(x_1)=\gamma,
\]
and
\[
q_4(x_0)=1-\gamma-\eta,\qquad q_4(x_1)=\gamma+\eta.
\]
Under \(q_3\), \(q_2\) is much better than \(q_1\) in coverage profile, whereas under \(q_4\), \(q_1\) is better than \(q_2\). However, \(q_3\) and \(q_4\) have total variation distance \(\eta\), and hence their \(m\)-sample laws are indistinguishable when \(\eta\ll 1/m\). Therefore no evaluation algorithm can reliably choose the correct model in both worlds. Note that under our proof construction, the negative result holds even under the following lower-bound (finite support) assumption on $q^\star$.

\begin{assumption}\label{assum:lowerbound}
There exists a $\gamma > 0$ such that $q^\star(x) \geq \gamma$ for all $x \in \supp(q^\star).$
\end{assumption}

At a high level, the coverage profile is not evaluable even under \cref{assum:lowerbound} because it is discontinuous around the threshold $N$. More specifically, it does not penalize models that come arbitrarily close to but remain below the threshold and penalizes models that exceed the threshold equally. Hence, models that are indistinguishable with finite samples could actually have very different coverage profiles. 

This motivates the need for a margin assumption that prevents the ratio $\frac{q^\star(x)}{q(x)}$ from clustering too close to the threshold $N$. We present such an assumption along with the corresponding positive result about the evaluability of the coverage profile below.

\begin{assumption} \label{assum:margin}
We say $(q^\star,q)$ have an $(N,\alpha)$ margin if $\abs{\frac{q^\star(x)}{q(x)} - N} \geq \alpha$ for all $x \in \supp(q^\star).$
\end{assumption}

\begin{restatable}{theorem}{coveragepositive} \label{res:coveragepositive}
Fix $N \geq 2$, $\epsilon, \delta \in (0,1)$, and any two candidate models $q_1,q_2 \in \cM$ such that $(q^\star,q)$ have an $(N,\alpha)$ margin for some fixed $\alpha>0$ for each $q \in \{q_1,q_2\}$. 
Then, there exists an evaluation algorithm $\cA$ such that if $q^\star$ satisfies \cref{assum:lowerbound} for some fixed $\gamma>0$, given evaluation data $S_{\text{eval}} = \{x_1,\ldots, x_m\}$ of size $$m\geq \max \left(\frac{3(N+\alpha)^2}{\alpha^2\gamma}\log \frac{4}{\gamma\delta}\hspace{0.5em},\hspace{0.5em}\frac{2}{\epsilon^2}\log \frac{8}{\delta} \right),$$ with $x_i\sim q^\star$, with probability at least $1-\delta$,
    \[
    \begin{aligned}
    &\cA(\{q_1,q_2\},\sev) = q_1\\ &\implies f_{\cov}(q_1,q^\star)\leq f_{\cov}(q_2,q^\star)+\epsilon
    \end{aligned}
    \]
    and
    \[
    \begin{aligned}
    &\cA(\{q_1,q_2\},\sev) = q_2\\ &\implies f_{\cov}(q_2,q^\star)\leq f_{\cov}(q_1,q^\star)+\epsilon
    \end{aligned}
    \]
\end{restatable}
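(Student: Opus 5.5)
Under \cref{assum:lowerbound}, $q^\star$ has finite support of size at most $1/\gamma$, and every support point has mass at least $\gamma$. The plan is therefore to learn the support of $q^\star$ (and approximate values of $q^\star$ on it) from the sample. We then use the margin to classify each support point correctly as satisfying $q^\star(x)/q(x)\ge N$ or not, and finally estimate the resulting probabilities empirically.

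\textbf{The algorithm.}
\begin{itemize}
\item Let $\hat q(x)$ be the empirical frequency of $x$ in $\sev$, and let $\hat S=\{x:\hat q(x)>0\}$.
\item For each $q\in\{q_1,q_2\}$, define $\hat A_q=\{x\in\hat S:\hat q(x)/q(x)\ge N\}$ and the score $s(q,\sev)=\hat q(\hat A_q)$.
\item Output the model with the smaller score.
\end{itemize}

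\textbf{Step 1: multiplicative concentration on the support.} For each $x\in\supp(q^\star)$, since $q^\star(x)\ge\gamma$, a multiplicative Chernoff bound gives
\[
\Pr\bigl[|\hat q(x)-q^\star(x)|>\beta q^\star(x)\bigr]\le 2\exp\bigl(-\beta^2 m\gamma/3\bigr).
\]
We take $\beta=\alpha/(N+\alpha)$ and union bound over the at most $1/\gamma$ support points. The failure probability is at most $\delta/2$ once $m\ge \frac{3(N+\alpha)^2}{\alpha^2\gamma}\log\frac{4}{\gamma\delta}$, which matches the first term of the stated sample size. On this event, $\hat S=\supp(q^\star)$ (because $\beta<1$ and $\hat q(x)=0$ off the support), and each ratio $\hat q(x)/q(x)$ lies within a factor $1\pm\beta$ of $r(x)=q^\star(x)/q(x)$.

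\textbf{Step 2: using the margin (the main obstacle).} We need a $(1\pm\beta)$ multiplicative perturbation of $r(x)$ never to cross the threshold $N$.
\begin{itemize}
\item If $r\ge N+\alpha$, then $(1-\beta)r\ge (1-\beta)(N+\alpha)=N$, so the point stays on the correct side.
\item If $r\le N-\alpha$, then $(1+\beta)r\le (N-\alpha)\bigl(1+\alpha/(N+\alpha)\bigr)=(N-\alpha)(N+2\alpha)/(N+\alpha)$. This is below $N$ iff $N^2+N\alpha-2\alpha^2<N^2+N\alpha$, which holds.
\end{itemize}
Hence $\hat A_q=A_q:=\{x:r(x)\ge N\}\cap\supp(q^\star)$ exactly, so $f_{\cov}(q,q^\star)=q^\star(A_q)$. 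Points with $q(x)=0$ have $r=\infty$ and are classified correctly as well. This is where the specific choice of $\beta$ matters, and I would double-check the lower-side inequality carefully.

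\textbf{Step 3: estimating $q^\star(A_q)$.} The sets $A_{q_1}$ and $A_{q_2}$ are fixed, determined by $q^\star$ and the candidates. On the event of Step 1, $s(q,\sev)=\hat q(A_q)$. By Hoeffding's inequality together with a union bound over the two sets, $|\hat q(A_q)-q^\star(A_q)|\le\epsilon/2$ for both $q$ with probability at least $1-\delta/2$, provided $m\ge\frac{2}{\epsilon^2}\log\frac{8}{\delta}$.

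\textbf{Conclusion.} On the intersection of the two events, which has probability at least $1-\delta$, both scores are within $\epsilon/2$ of the true coverage profiles. Choosing the model with the smaller score therefore yields a model whose coverage profile is within $\epsilon$ of the other's, giving both implications.
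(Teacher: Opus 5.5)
Your proposal is correct and follows the paper's proof essentially step for step: the same empirical-frequency score $s(q,\sev)=\hat q(\hat A_q)$, the same multiplicative Chernoff bound with $\beta=\alpha/(N+\alpha)$, the same union bound over the at most $1/\gamma$ support points, and the same Hoeffding bound over the two fixed sets $A_{q_1},A_{q_2}$, giving exactly the stated sample size. The only difference is cosmetic and lies in Step 2: you check directly that a $(1\pm\beta)$ perturbation of $r(x)$ cannot cross $N$, whereas the paper equivalently lower-bounds the allowed deviation $q^\star(x)\abs{1-Nq(x)/q^\star(x)}$ by $\frac{\alpha}{N+\alpha}q^\star(x)$; your lower-side inequality is correct.
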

Note that the final statement in \cref{res:coveragepositive} is identical to the definition of strong evaluability. The difference is that rather than the result holding for all possible ground truths and pairs of candidate models, it only holds in settings where \cref{assum:lowerbound} and \cref{assum:margin} are satisfied. 
\section{Perplexity as a Score Function} \label{sec:perplexity}
In this section, we highlight the limitations of the perplexity score.
We have already shown that KL divergence is not weakly evaluable, thus not evaluable by perplexity. Then we focus on other potential metrics, specifically on TV distance and a restricted notion of KL divergence. 
Through our results, we provide intuition for why perplexity might not be practical as a score function in general. In the following, we present our results in terms of the negative log-likelihood (nll) score rather than perplexity; since perplexity is simply the exponential of the nll, both scores induce the same ordering over models and are therefore equivalent for the purposes of evaluability. We formally define the nll score as $$\nll(q,\sev) = -\frac{1}{m}\sum_{i=1}^m \log q(x_i).$$

\paragraph{TV Distance}
We define Total Variation distance as an evaluation metric as $f_{\TV}(q,q^\star) := \TV(q^\star,q)$. $f_{\TV}$ is a symmetric and bounded metric and equivalent to IPM w.r.t. $\cF=[0,1]^\cX$ being all bounded real-valued functions. 

\paragraph{$\beta$-Restricted KL Divergence}
From \cref{res:KL}, we know that $f_{\KL}$ is not weakly evaluable by any score function, including the nll score, due to rare events. To eliminate dependence on rare events, we can consider a restricted notion of KL divergence by ignoring a small fraction of samples, potentially resolving the unboundedness issue. We define this below.

\begin{definition}[Restricted KL Divergences]
Given a subset $E \subseteq \cX$, define the $\KL$ divergence restricted to $E$ as $$\KL_{E}(q^\star,q) = \left[\mathbb{E}_{x \sim q^\star(\cdot \mid E)}\left[ \log \frac{q^\star(x)}{q(x)}\right]\right]_+.$$
\end{definition}

Now, we define a class of evaluation metrics parametrized by the amount of mass that we remove from $\cX$ under $q^\star$.

\begin{definition}
Given $\beta \in (0,\frac{1}{2})$, the $\beta$-Restricted KL divergence is defined as $$f_{\beta-\KL}(q,q^\star) := \inf_{E \subseteq \cX : q^\star(E) \geq 1-\beta}\KL_{E}(q^\star,q).$$
\end{definition}
Intuitively, \(f_{\beta-\KL}\) measures the smallest positive average log-likelihood ratio on a subset carrying at least \(1-\beta\) mass under \(q^\star\). This is an unnormalized restriction of the KL integrand rather than explicitly being a KL divergence between the conditional distributions.

\subsection{Evaluability Results of Perplexity}
Our first result reveals that without additional assumptions, the nll score fails to weakly evaluate both $f_{TV}$ and $f_{\beta-\KL}$. 

\begin{restatable}{theorem}{TVandKLnegative}\label{res:TVandKLnegative}
$f_{\TV}$ is not weakly evaluable by the nll score. Additionally, $f_{\beta-\KL}$ is not weakly evaluable by nll score for any $\beta \in \left(0,\frac{1}{2}\right)$.
\end{restatable}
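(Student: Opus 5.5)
We have to show that the evaluation algorithm induced by the nll score, which picks the model with the smaller $\nll(q,\sev)$, fails for every $c\ge1$. Concretely, we will exhibit a ground truth $q^\star$ and two models $q_1,q_2$ such that three things hold: (i) with probability close to $1$ for every sample size $m$, we have $\nll(q_1,\sev)<\nll(q_2,\sev)$; (ii) $f(q_2,q^\star)=0$, or is tiny; and (iii) $f(q_1,q^\star)$ is bounded below by a constant. Then the algorithm outputs $q_1$, but $f(q_1,q^\star)\le c\cdot f(q_2,q^\star)+\epsilon$ fails once $\epsilon$ is below that constant. Since the comparison is purely between sample averages, the natural tool is the law of large numbers. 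Under $q^\star$, $\nll(q,\sev)$ concentrates around the cross-entropy $H(q^\star,q)=H(q^\star)+\KL(q^\star\|q)$. So nll ranks models by KL, and it suffices to find pairs where the KL ordering disagrees with the TV (resp.\ $\beta$-restricted KL) ordering. Finite support keeps every $\log q(x_i)$ bounded, so Hoeffding gives a concrete $m$ beyond which the wrong ordering holds with probability $\ge 1-\delta$. For small $m$ we can instead make the ordering hold deterministically, by choosing $q_1$ to dominate $q_2$ pointwise on the support of $q^\star$ wherever possible.

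\textbf{TV.} Take $\cX=\{x_0,x_1\}$ and $q^\star=(1/2,1/2)$. Let $q_2=(1-\eta,\eta)$. Then $\TV(q^\star,q_2)=1/2-\eta$, but $\KL(q^\star\|q_2)\approx \tfrac12\log(1/\eta)$ is huge. Let $q_1$ be a model with larger TV but finite KL, for instance $q_1$ spread over a third point, $q_1=(1/4,1/4,1/2)$ on $\{x_0,x_1,x_2\}$. Then $\TV(q^\star,q_1)=1/2$ and $\KL(q^\star\|q_1)=\log 2$. This gap is too small to be useful for $c\ge1$. The fix is to reverse the roles: we want $f(q_2)$ near $0$ and $f(q_1)$ large, with nll preferring $q_1$. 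Set $q_2=(1-\eta,\eta)$ against the ground truth $q^\star=(1-\eta',\eta')$ with $\eta\ll\eta'$. Then $\TV(q^\star,q_2)\le\eta'$, yet each observation of $x_1$ costs $\log(1/\eta)$. Take $q_1$ uniform on $\{x_0,x_1\}$, so $\TV(q^\star,q_1)\approx1/2$. Then
\[
\nll(q_1)=\log 2,\qquad \nll(q_2)\approx \hat p\log(1/\eta),
\]
where $\hat p$ is the empirical frequency of $x_1$. We choose $\eta'$ small (say $\epsilon/2c$) and $\eta\le e^{-4/\eta'}$. By Chernoff, $\hat p\ge \eta'/2$ with high probability once $m\gtrsim \log(1/\delta)/\eta'$, which makes $\nll(q_2)>\log 2$. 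Since $\eta$ can depend on $m$, we can push $\eta$ down for any claimed $m_{\text{evl}}$: taking $\eta$ extremely small turns even a single observation of $x_1$ into a loss. The main obstacle is quantifier order. The sample size $m_{\text{evl}}(\epsilon,\delta,c)$ is fixed before the instance, so we must pick $\eta'$ from $(\epsilon,c)$, then $m\ge m_{\text{evl}}$ large enough that $x_1$ appears with probability $>1-\delta$, then $\eta$. This is valid because the definition quantifies over all $m\ge m_{\text{evl}}$.

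\textbf{$\beta$-restricted KL.} The same idea works, except the rare point must carry mass above $\beta$ so that it cannot be discarded by $E$. The natural construction inverts this. Let $q^\star$ put mass $\beta'<\beta$ on $x_1$ and the rest on $x_0$, and let $q_2=(1-\eta,\eta)$. Then $f_{\beta-\KL}(q_2,q^\star)$ can drop $x_1$: with $E=\{x_0\}$ the conditional average of $\log\frac{1-\beta'}{1-\eta}$ is nonpositive, so $[\cdot]_+=0$. Meanwhile $\nll(q_2)$ is blown up by $x_1$ as before. For $q_1$ take a model such as $q_1(x_0)=(1-\beta')/K$ with remaining mass elsewhere, and $K$ a constant like $e^{2}$. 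Every admissible $E$ must contain $x_0$, so $f_{\beta-\KL}(q_1,q^\star)\ge$ a positive constant, about $\log K$ weighted by the share of $x_0$ in $E$. On the other hand $\nll(q_1)$ is bounded by roughly $\log K+\beta'\log(1/q_1(x_1))$, and choosing $\eta$ tiny makes $\nll(q_2)$ larger. The remaining step is to check that the $[\cdot]_+$ and the infimum over $E$ behave as claimed, and then apply the same quantifier-ordered choice of $(\beta',m,\eta)$ as in the TV case.
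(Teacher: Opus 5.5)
Your proposal is correct and follows essentially the paper's argument. In both parts, the metric-good model assigns tiny mass to a point of constant $q^\star$-mass, so once that point appears in $\sev$ (by Chernoff) its nll exceeds that of a metric-far model with bounded log-likelihood ratios; you also handle the quantifier order correctly by fixing the instance from $(\epsilon,c)$ and then taking some $m\ge m_{\text{evl}}$. The differences are cosmetic: your TV instance uses a uniform $q_1$ on two points instead of the paper's mass $r$ on an off-support point, and your $\beta$-restricted KL instance is the paper's with $q_2(x_1)=\eta>0$ and $q^\star(x_1)=\beta'<\beta$ (just make sure the leftover mass of $q_1$ sits on $x_1$, as your nll bound implicitly assumes, since $q_1(x_1)=0$ would make $\nll(q_1,\sev)=\infty$).
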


At a high level, both of these metrics are geometrically misaligned from the nll score. In the case of the TV distance, the nll score can be unbounded, while total variation is bounded in $[0,1]$. Further, the nll score is fundamentally one-sided in the sense that it severely penalizes models for undercounting mass relative to $q^\star$, whereas TV distance is entirely governed by how much mass is ``moved'' by $q$ with respect to $q^\star$, regardless of the direction. Our proof construction capitalizes on this mismatch, causing the nll score to misrank two models with a constant gap in their TV distance to $q^\star$.

While $f_{\beta-\KL}(\cdot,q^\star)$ can still be unbounded unlike TV distance, the nll score is still highly sensitive to single points while $f_{\beta-\KL}$ allows us to ignore up to $\beta$ mass of points. This artifact of $f_{\beta-\KL}$ perhaps makes it more conducive to being evaluable in general compared to standard KL divergence, but this is not exploited by the nll score. However, the following result shows that $f_{\beta-\KL}$ is not strongly evaluable in general, so any positive result would be a weak one. 

\begin{restatable}{theorem}{betaKLnegative}\label{res:betaKLnegative}
$f_{\beta-\KL}$ is not strongly evaluable.
\end{restatable}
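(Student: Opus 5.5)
The plan is an indistinguishability construction. I will build two ground truths $q_3,q_4$ that are $\eta$-close in total variation, so that their $m$-sample laws differ by at most $m\eta$ in TV. I will pair them with two fixed candidates $q_1,q_2$ chosen so that, for some constant gap $\Delta>0$ independent of $\eta$,
\[
f_{\beta-\KL}(q_1,q_3)\ge f_{\beta-\KL}(q_2,q_3)+\Delta
\quad\text{and}\quad
f_{\beta-\KL}(q_2,q_4)\ge f_{\beta-\KL}(q_1,q_4)+\Delta.
\]
The source of the discontinuity is the constraint $q^\star(E)\ge 1-\beta$ in the infimum. When the ground truth places mass exactly $\beta$ on a point, that point may be discarded. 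When it places mass $\beta+\eta$, that point cannot be discarded, and its log-likelihood ratio must then be counted. This is the same threshold phenomenon as in the proof construction of \cref{res:coveragenegative}.

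Concretely, I take a three-point domain $\{x_0,x_1,x_2\}$ and the two ground truths
\[
q_3(x_0)=1-2\beta,\quad q_3(x_1)=q_3(x_2)=\beta,
\qquad
q_4(x_0)=1-2\beta-\eta,\quad q_4(x_1)=\beta+\eta,\quad q_4(x_2)=\beta.
\]
The candidates are $q_1=q_3$, except that $q_1(x_2)$ is tiny and $q_1(x_1)$ is slightly inflated, and $q_2=q_3$, except that $q_2(x_1)$ is tiny.

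Under $q_3$ with $\beta<1/2$, one may discard exactly one of $x_1,x_2$. For $q_1$, discarding $x_2$ gives a $\KL_E$ that is at most $0$ after the positive part, so $f_{\beta-\KL}(q_1,q_3)=0$. Likewise $f_{\beta-\KL}(q_2,q_3)=0$ by discarding $x_1$. This first attempt gives no gap, so the candidates need adjusting. I will make $q_2$ tiny on both $x_1$ and $x_2$, which forces a large value under $q_3$, while keeping $q_1$ tiny only on $x_2$.

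Under $q_4$, the point $x_1$ has mass above $\beta$ and can only be partially excluded. Here I must check carefully whether the infimum ranges over sets only or could effectively allow fractional exclusion. Since $E$ is a set, it cannot. The valid $E$ under $q_4$ are then supersets of $\{x_0,x_1\}$ or of $\{x_0,x_2\}$ of appropriate mass. So the gap has to be engineered through which points are forced.

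The cleaner way is to make the mass threshold bite on the pair: under $q_3$ the mass $\beta$ on a bad point can be dropped, while under $q_4$ that point carries $\beta+\eta$ and cannot be dropped. I therefore choose the candidates so that the model's deficiency sits on the point that becomes non-droppable, and swap roles using a second point whose mass moves the other way. That is, I will use $q_3(x_1)=\beta+\eta$, $q_3(x_2)=\beta$ versus $q_4(x_1)=\beta$, $q_4(x_2)=\beta+\eta$, with $x_0$ carrying the rest. Then in each world exactly one of $x_1,x_2$ is droppable, and it is a different one in the two worlds. I set $q_1$ tiny at $x_1$ and $q_2$ tiny at $x_2$ (ratio $e^{M}$), with $q_1,q_2$ otherwise matching the ground truth up to $O(\eta)$ on the other points. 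Under $q_3$, only $x_2$ can be dropped, so $q_1$ pays roughly $\beta M/(1-\beta)$ while $q_2$ pays about $0$; under $q_4$ the roles reverse. Choosing $M$ large then gives a constant gap.

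The final step is standard. Because $\TV(q_3^{m},q_4^{m})\le 2m\eta$, any algorithm outputs $q_1$ with nearly the same probability in both worlds. It is therefore wrong by $\Delta>\epsilon$ with probability at least about $1/2-m\eta$ in one of the worlds, which contradicts strong evaluability for $\delta<1/3$ once $\eta\ll 1/m$.

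I expect the main obstacle to be the bookkeeping of the infimum together with the positive part. The key check is that small $O(\eta)$ discrepancies on the retained points do not let the model obtain a negative average that cancels the $M$ term. I will handle this by making $q_i$ slightly smaller than the ground truth on $x_0$ as well, so that every retained contribution is nonnegative.
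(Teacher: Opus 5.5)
Your final (swapped) construction is correct, and it takes a genuinely different and more elementary route than the paper.

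\textbf{How the two routes differ.} The paper uses a Paninski-type mixture. There $q_0^\star$ spreads mass $1-\beta$ uniformly over a large set $A$, and $q_S^\star$ spreads it over a random half $S\subset A$. These are far apart in $\TV$, but the $m$-sample law of the mixture over $S$ is $O(m^2/N)$-close to $(q_0^\star)^m$ by a collision argument. The metric gap comes from the likelihood ratio on the retained bulk changing by a factor $2$, with $a$ tuned by the intermediate value theorem so that the ranking flips.

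You instead run a two-point Le Cam argument: $\TV(q_3,q_4)=\eta$, so $\TV(q_3^m,q_4^m)\le m\eta$. You exploit the all-or-nothing treatment of atoms at the threshold. Under $q_3$, every admissible $E$ contains $x_1$ because $q_3(x_1)>\beta$. Under $q_4$, the set $\{x_0,x_2\}$, of mass exactly $1-\beta$, is admissible.

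\textbf{Bookkeeping to fix (both harmless).} First, your candidates must sum to one; making $q_i$ smaller than the ground truth on $x_0$ as well is impossible on three points. Second, the bad model's value is not $\beta M/(1-\beta)$: taking $E=\cX$ is cheaper, and for $\beta\ge 1/3$ the point $x_0$ is also droppable. Only the fact that $x_1$ lies in every admissible $E$ matters.

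Concretely, take $q_1=(1-\beta-\eta-\beta e^{-M},\,\beta e^{-M},\,\beta+\eta)$ on $(x_0,x_1,x_2)$, and $q_2$ symmetric.
\begin{itemize}
\item $E=\{x_0,x_2\}$ gives $f_{\beta-\KL}(q_1,q_4)=0$ exactly, since the $x_0$ term is nonpositive.
\item For every admissible $E$ under $q_3$, use $\log\frac{q_3(x)}{q_1(x)}\ge\log q_3(x)$ on $x_0,x_2$ and $q_3(E)\le 1$. This gives $\KL_E(q_3,q_1)\ge(\beta+\eta)M-2/e$ once the right-hand side is positive.
\item The same holds symmetrically for $q_2$.
\end{itemize}

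\textbf{What each route buys.} In your construction the better model has value exactly $0$ in each world. So your argument actually rules out $c$-weak evaluability for every $c\ge 1$. The paper's construction cannot give this, since there $f_{\beta-\KL}(q_2,q_S^\star)\ge\frac{1-2\beta}{1-\beta}\log 2$ is bounded away from $0$. The paper in fact leaves weak evaluability open.

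In exchange, the paper's mechanism depends less on the exact form of the definition. All its likelihood ratios are bounded, and it goes through verbatim if the set $E$ is relaxed to fractional weights. Under that relaxation your gap shrinks to order $\eta M$, and you would need $M$ of order $1/\eta$, i.e.\ growing with $m$.
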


Armed with this intuition, we investigate a natural follow-up question: \textit{Under what conditions does the nll score reliably evaluate either of these metrics?} Perhaps, if one of our candidate models is close to $q^\star$ in a certain sense, we can ensure that the nll ratio $\frac{q^\star(x)}{q(x)}$ is bounded. 

The failure of the nll score to evaluate TV distance stems from extreme likelihood distortions on rare events that do not impact the TV calculation in the same way. To preclude this pathology, we impose the following bounded ratio assumption.

\begin{assumption}[$\Delta$-Closeness in Ratio] \label{assum:alphaclose}
$(q^\star,q)$ are $\Delta$-close in ratio if there exists fixed $\Delta \in (0,\frac{1}{2})$ such that $$(1-\Delta)q^\star(x) \leq q(x) \leq (1+\Delta)q^\star(x).$$
\end{assumption}

Note that the above assumption ensures that $q$ cannot be supported on any points that $q^\star$ does not place mass on. This should alleviate the geometric mismatch between the nll score and TV distance because:
\begin{itemize}
    \item If $q$ ``moves'' mass onto points that are impossible under $q^\star$, this is included in the calculation for $f_{\TV}(q,q^\star)$.
    \item These points are undetectable by the nll score because they can never be observed in $\sev$.
\end{itemize}

Indeed, we are now able to recover a positive evaluability result for $f_{\TV}(\cdot,q^\star)$ by nll score. 

\begin{restatable}{proposition}{tvpositivenll} \label{res:tvpositivenll}
Fix $\eps,\delta \in (0,1)$ and two candidate models $q_1,q_2 \in \cM$ such that $(q^\star,q)$ are $\Delta$-close in ratio for some $q \in \{q_1,q_2\}$ and $\Delta \in (0,\frac{1}{2})$. Then, if $\Delta \leq O(\epsilon^2)$, there exists an evaluation algorithm $\cA$ such that given evaluation data $S_{\text{eval}} = \{x_1,\ldots, x_m\}$ of size $$m\geq O\left(\frac{1}{\epsilon^2}\log \frac{1}{\delta}\right),$$ with $x_i\sim q^\star$, with probability at least $1-\delta$,
    \[\cA(\{q_1,q_2\},\sev)=q_1 \implies f_{\TV}(q_1,q^\star)\leq f_{\TV}(q_2,q^\star)+\epsilon\,.\]
    and
    \[\cA(\{q_1,q_2\},\sev)=q_2 \implies f_{\TV}(q_2,q^\star)\leq f_{\TV}(q_1,q^\star)+\epsilon\,.\]
\end{restatable}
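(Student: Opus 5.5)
The plan is to use the plain nll-induced algorithm: output the model with the smaller $\nll(\cdot,\sev)$, breaking ties randomly. The argument is symmetric in $q_1,q_2$, so I assume without loss of generality that $(q^\star,q_1)$ is $\Delta$-close in ratio. Summing \cref{assum:alphaclose} over $x$ (or integrating) gives $f_{\TV}(q_1,q^\star)=\frac12\sum_x|q_1(x)-q^\star(x)|\le \Delta/2$, so $q_1$ is nearly optimal. I then split into two cases according to $f_{\TV}(q_2,q^\star)$.

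\textbf{Case $f_{\TV}(q_2,q^\star)<\epsilon$.} Here every output is acceptable, deterministically.
\begin{itemize}
\item If $\cA$ outputs $q_2$, then $f_{\TV}(q_2,q^\star)<\epsilon\le f_{\TV}(q_1,q^\star)+\epsilon$.
\item If $\cA$ outputs $q_1$, then $f_{\TV}(q_1,q^\star)\le \Delta/2\le \epsilon \le f_{\TV}(q_2,q^\star)+\epsilon$, using $\Delta\le O(\epsilon^2)\le 2\epsilon$.
\end{itemize}

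\textbf{Case $f_{\TV}(q_2,q^\star)\ge\epsilon$.} Outputting $q_1$ is always fine, so it suffices to show that $\Pr[\nll(q_2,\sev)\le \nll(q_1,\sev)]\le\delta$. Let $Y_i=\log\frac{q_1(x_i)}{q_2(x_i)}$, so that $m(\nll(q_2,\sev)-\nll(q_1,\sev))=\sum_i Y_i$.

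The main obstacle is that $Y_i$ is unbounded in both directions: $q_2$ may be tiny, and $q_1$ may be tiny wherever $q^\star$ is. This rules out Hoeffding-type concentration, and comparing expectations via Pinsker plus a KL bound would not suffice either. The fix is a one-sided Chernoff bound using the exponential moment $e^{-Y/2}$, which is always finite. By Markov's inequality,
\[
\Pr\Big[\sum_i Y_i\le 0\Big]\le \Big(\EEs{x\sim q^\star}{\sqrt{q_2(x)/q_1(x)}}\Big)^m .
\]
Since $q_1\ge(1-\Delta)q^\star$, the inner expectation satisfies
\[
\sum_x q^\star(x)\sqrt{q_2(x)/q_1(x)}\le (1-\Delta)^{-1/2}\sum_x\sqrt{q^\star(x)q_2(x)} = (1-\Delta)^{-1/2}\,\mathrm{BC}(q^\star,q_2),
\]
where $\mathrm{BC}$ is the Bhattacharyya coefficient.

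Next I use the standard Hellinger--TV relation $\mathrm{BC}\le\sqrt{1-\TV^2}\le 1-\TV^2/2$. This gives $\mathrm{BC}(q^\star,q_2)\le 1-\epsilon^2/2$. Also $(1-\Delta)^{-1/2}\le 1+\Delta\le e^{\Delta}$ for $\Delta<1/2$. Combining,
\[
\Pr[\nll(q_2,\sev)\le\nll(q_1,\sev)]\le \exp\big(-m(\epsilon^2/2-\Delta)\big).
\]
Taking $\Delta\le\epsilon^2/4$, which is the $O(\epsilon^2)$ condition, this is at most $\exp(-m\epsilon^2/4)\le\delta$ once $m\ge \frac{4}{\epsilon^2}\log\frac1\delta$. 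Ties are included in this event, so random tie-breaking is harmless. This gives the claimed sample complexity $O(\epsilon^{-2}\log(1/\delta))$.

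The only delicate step is this exponential-moment bound. What makes it work is that the ratio assumption controls $q^\star/q_1$ from above, which is exactly the direction needed to bound $\sqrt{q_2/q_1}$ by $\sqrt{q_2/q^\star}$ up to a constant. In particular, no assumption on $q_2$ is needed.
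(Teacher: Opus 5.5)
Your proof is correct and rests on the same key step as the paper's: Markov's inequality applied to the square-root likelihood ratio, which reduces to the Bhattacharyya coefficient, followed by the Hellinger--TV inequality. The difference is only organizational. You compare $q_2$ with $q_1$ directly and absorb $q_1\ge(1-\Delta)q^\star$ into the factor $(1-\Delta)^{-1/2}$. The paper instead routes both models through $\nll(q^\star,\sev)$, using a separate Hoeffding step for $q_1$ (unnecessary, since $\log(q^\star/q_1)\le -\log(1-\Delta)$ holds pointwise) and then a union bound; your version is slightly tighter and gives explicit constants.
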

Fundamentally, \cref{assum:alphaclose} transforms our setting into a semi-realizable one and ensures that one of the models must achieve a bounded nll score close to that of $q^\star$. 

Now, it is natural to ask whether we can obtain a similar positive result for $\beta$-restricted $\KL$ via additional assumptions. We do not know of any such non-degenerate assumption (beyond assuming that one of the candidate models is $q^\star$ itself). In designing a potential assumption to recover evaluability for $f_{\beta-\KL}(\cdot,q^\star)$ by the nll score, we contrast it with TV distance. First, a condition in the style of \cref{assum:alphaclose} is too strong for this setting since the restricted KL divergence still retains the one-sided property of the standard KL divergence. Additionally, it allows for arbitrary behavior of models on $\beta$ fraction of the mass under $q^\star$. Such an assumption is almost redundant because it rules out behaviors that this metric is designed to tolerate. Instead, when thinking about the notion of a ``good'' candidate model in this setting, we need a model whose undercoverage is confined to a fixed region of mass under $q^\star$, and on the remaining region, the likelihood ratio is bounded. However, this is still not sufficient, as $f_{\beta-\KL}$ is inherently discontinuous with respect to $\beta$. The metric allows us to ignore arbitrary subsets of mass, and as $\beta$ changes, these subsets can look very different. Neither TV distance nor nll score have this property.

\paragraph{Discussion}
The perplexity (nll) score aggregates per-sample penalties of the form $-\log q(x)$. The score is highly sensitive to the worst case. A single data point that is assigned extremely small probability by $q$ dominates the score, regardless of the model's behavior on the rest of the distribution. This property of the score is fundamentally misaligned with many distributional metrics, such as TV distance and restricted KL divergence, which are robust to small regions of mass under~$q^\star$. Overall, our results from this section suggest that the perplexity score is poorly suited as a score function that ranks models by overall distributional similarity. 

The perplexity score is able to more reliably evaluate these metrics under assumptions about the existence of a ``good'' model with respect to the ground-truth, validating its use in practice. However, our results highlight that perplexity cannot be used blindly as an evaluation method without a principled understanding of the underlying candidate models that it is used to rank. 

More broadly, the pitfalls of the perplexity score uncover an important principle about evaluability: score functions need to be aligned in some sense with the geometric properties of evaluation metrics. Metrics that tolerate rare errors cannot be evaluated by scores that amplify them. Note that this alignment may still not be sufficient for evaluability, as in the case of similarity-based metrics like the $\alpha$-R\'enyi divergence and coverage profile. 

\section{Open Questions}
\paragraph{The Relationship Between Estimability and Strong Evaluability} It is easy to see that when an evaluation metric $f$ is estimable, then it is strongly evaluable. In fact, the same score function that estimates $f$ also strongly evaluates $f$. It is unclear whether the converse holds true. That is, does strong evaluability imply estimability, and does such a statement hold in general or for specific classes of evaluation metrics? 
\paragraph{Strict Dichotomy of Evaluability for Real-Valued IPMs} In \cref{res:realeval}, we establish a weak dichotomy for IPMs with real-valued, bounded test classes. In particular, we leverage the $\gamma$-fat-shattering dimension as a tool to establish this dichotomy. We prove that even if the $\gamma$-fat-shattering dimension of the test class is unbounded for some $\gamma$, the corresponding IPM is still $3$-weakly evaluable. However, we were not able to prove that the factor of 3 is optimal, which would mirror our result in the case where the test class is binary-valued. We believe that this same dichotomy between strong evaluability and 3-weak evaluability exists in the real-valued case, but a tool other than the $\gamma$-fat-shattering dimension might be required. 
\paragraph{Evaluability of $\beta$-Restricted KL Divergence}
In discussing the limitations of perplexity as an evaluation method, we introduced the $\beta$-restricted KL divergence and showed that the nll score cannot evaluate this metric and may not be able to under any reasonable conditions. However, this metric is a robust alternative to the standard KL divergence. Any general evaluability guarantees for the $\beta$-restricted KL beyond the nll score are left as an open question.

\section*{Acknowledgments}
This project has received funding from the European Research Council (ERC) under the European Union’s Horizon 2020 research and innovation program (grant agreement No. 882396), by the Israel Science Foundation,  the Yandex Initiative for Machine Learning at Tel Aviv University and a grant from the Tel Aviv University Center for AI and Data Science (TAD).

Shay Moran is a Robert J.\ Shillman Fellow; he acknowledges support by ISF grant 1225/20, by BSF grant 2018385, by Israel PBC-VATAT, by the Technion Center for Machine Learning and Intelligent Systems (MLIS), and by the European Union (ERC, GENERALIZATION, 101039692). Views and opinions expressed are, however, those of the author(s) only and do not necessarily reflect those of the European Union or the European Research Council Executive Agency. Neither the European Union nor the granting authority can be held responsible for them.

Han Shao acknowledges support from an Adobe Research gift. HS thanks Shanshan Wu and Freda Shi for their thoughtful discussions from an application perspective.

\newpage

\bibliography{ref}
\bibliographystyle{apalike}

\newpage
\appendix

\section{Proofs From Section~\ref{sec:test}}
\binaryeval*
\begin{proof}
    The first item can be proved through standard concentration inequalities and Sauer's Lemma.

    For the second item, fix $\eps,\delta \in (0,1)$ and two models $q_1, q_2 \in \cM$. Let $\hat q$ be the output of any algorithm by \citet{bousquet2019optimal}. From Theorem 8 of \citet{bousquet2019optimal}, we know that $\hat{q}$ satisfies $d_\cF(\hat q, q)\leq d_\cF(q^\star, q) +\frac{\eps}{2}$ for $q\in \{q_1,q_2\}$, and $d_\cF(\hat q,q^\star) \leq 2\min_{q' \in \{q_1,q_2\}} d_\cF(q',q^\star) +\frac{\eps}{2}$ with probability at least $1-\delta$ if our sample size $$m \geq \min \left\{ O \left( \frac{2 + \log(1/\delta)}{\epsilon^2} \right), \tilde{O} \left(8\frac{\log^{3/2}(1/\delta)}{\epsilon^{5/2}} \right) \right\}.$$ 
    
    Fix a dataset $\sev$ with $m$ samples. Consider the score function  $s(q,\sev) = d_\cF(\hat q, q)$.
    
   Then, if $s(q_1,\sev) \leq s(q_2,\sev)$, we have
\begin{align*}
d_{\cF}(q^\star, q_1)
&\le d_{\cF}(\hat q, q_1) + d_{\cF}(\hat q, q^\star) \\
&\le d_{\cF}(\hat q, q_1)
   + 2 \min_{q' \in \{q_1,q_2\}} d_{\cF}(q^\star, q')
   + \frac{\eps}{2} \\
&\le d_{\cF}(\hat q, q_2)
   + 2 \min_{q' \in \{q_1,q_2\}} d_{\cF}(q^\star, q')
   + \frac{\eps}{2} \\
&\le 3\, d_{\cF}(q^\star, q_2) + \epsilon \, .
\end{align*}

   Similarly, if $s(q_2,\sev) \leq s(q_1,\sev)$, we have that $d_{\cF}(q^\star,q_2) \leq 3d_{\cF}(q^\star,q_1)+\eps$. Hence, we can $3$-weakly evaluate $d_\cF$ using the evaluation algorithm $\cA$ induced by $s$.

   If there exists an evaluation algorithm $\cA$ that can $c'$-weakly evaluate $d_{\cF}(q^\star,\cdot)$ with $c'<3$, it implies that we can properly perform density estimation with multiplicative factor of $c'<3$, which conflicts with the lower bound result in Theorem 19 in \citet{bousquet2019optimal}, in which $d_{\cF} = \TV$.
\end{proof}

\realeval*
\begin{proof}
Fix $\epsilon,\delta \in (0,1)$. Given $\sev= \{x_1,...,x_m\}$, define the empirical distribution $\hat{q}(x) = \frac{1}{m} \sum_{i=1}^m\1[x_i=x]$. From \citet{alon}, we have that if $\pd_{\gamma}(\cF)$ is finite for all $\gamma$, then with probability at least $1-\delta$, if $m \geq O(\frac{1}{\epsilon^2}\left(d\log^2 \frac{4d}{\epsilon^2} + \log \frac{1}{\delta} \right))$, where $d = \pd_{\epsilon/24}(\cF)$, $$d_{\cF}(q^\star,\hat{q}) \leq \frac{\epsilon}{2}.$$

Then, consider the score function $$s(q,\sev) = d_{\cF}(q,\hat{q}).$$ Now, we have via the reverse triangle inequality that 
\begin{align*}
|s(q,\sev) - d_{\cF}(q^\star,q)| &= |d_{\cF}(q,\hat{q}) - d_{\cF}(q^\star,q)| \\
&\leq d_{\cF}(q^\star,\hat q)
\leq \frac{\epsilon}{2}.
\end{align*}
Thus, $d_{\cF}(q^\star,\cdot)$ is estimable. 

To see that this implies strong evaluability, consider any two models $q_1,q_2 \in \cM$. If $s(q_1,\sev) \leq s(q_2,\sev)$, then 
$$d_{\cF}(q^\star,q_1) \leq d_{\cF}(\hat{q},q_1) + \frac{\epsilon}{2} \leq d_{\cF}(\hat{q},q_2) + \frac{\epsilon}{2} \leq d_{\cF}(q^\star,q_2) + \epsilon.$$
Similarly, if $s(q_2,\sev) \leq s(q_1,\sev)$, then $d_{\cF}(q^\star,q_2) \leq d_{\cF}(q^\star,q_1) + \eps$. The evaluation algorithm $\cA$ induced by $s$ strongly evaluates $d_{\cF}$.

The proof of the second bullet is identical to the proof of the first part of the second bullet in \cref{res:binaryeval}, as all we require is that functions in $\cF$ are bounded in $[0,1]$.
\end{proof}

\notaxonomy*
\begin{proof}
We consider a domain $\cX = \underset{k\geq 1}{\bigsqcup}X_k$ partitioned into a disjoint union of finite-size subsets, where each $X_k$ is of size $N_k$ to be determined later. Let $\cH_k$ denote the set of all functions mapping from $X_k$ to $\{0,1\}$. Finally, define $\cF := \bigcup_{k \geq 1} \{f_{k,h}(x) = \frac{1}{k}\mathbf{1}\{x\in X_k\}h(x) | h \in \cH_k \}.$

We first prove that for any fixed $\gamma > 0$, $\pd_\gamma(\cF) < \infty$. This implies by \cref{res:realeval} that the IPM metric w.r.t $\cF$, $d_\cF$, is estimable. Fix $\gamma > 0$. Now observe that for any $k > \frac{1}{2\gamma}$ and for any $x \in X_k$ and $f \in \cF$, $f(x) \in \{0,\frac{1}{k}\}$ and $\frac{1}{k} < 2\gamma$. It follows that $x$ cannot be $\gamma$-fat-shattered by $\cF$. Thus, all of the shattered points must come from the union over the subsets $X_k$ for which $k \leq (1/2\gamma)$. Since there are finitely many such $k'$s, the $\gamma$-fat-shattering dimension of $\pd_\gamma(\cF)$ must be finite.

Fix any $k \geq 1$. Observe that restricted to $X_k$, $\cF$ is isomorphic to the class $\{\frac{1}{k}h | h \in \cH_k \}$, which is the class of all Boolean functions on $N_k$ points, scaled by $\frac{1}{k}$. In this case, the VC dimension of $\cF$ restricted to $X_k$ is $N_k$. Now, we invoke the standard sample complexity lower bound for estimating the IPM w.r.t a binary class up to accuracy parameter $\epsilon$ \citep{understandingml}. Because we are scaling by $\frac{1}{k}$, we require $m \geq c\frac{N_k+\log(1/\delta)}{(k\epsilon)^2}$, for some universal constant $c > 0$. Now, for each $k \in \mathbb{N}$, consider the accuracy parameter $\epsilon_k = \frac{1}{k}$ and error parameter $\delta_k = \frac{1}{k}$. Plugging this into our lower bound gives that we need at least $c(N_k+\log k)$ samples. Now, consider any finite or countable sequence of functions $M_1(\epsilon,\delta), M_2(\epsilon,\delta),...$

Choose $N_k = \lceil \frac{1}{c}\max_{1 \leq j \leq k}M_j(\epsilon_k,\delta_k) \rceil + 1$. We conclude the proof by assuming that a finite taxonomy of sample complexities exists and establishing a contradiction. In particular, there is some function $M_i$ in the sequence such that the sample complexity of estimating $d_{\cF}$ up to accuracy $\epsilon$ is upper bounded by $M_i(\epsilon,\delta)$ for all sufficiently small $\epsilon,\delta$. Consider the sequence of accuracies, $\epsilon_k,\delta_k$ for all $k \geq i$. At any such $\epsilon_k,\delta_k$, we know that the sample complexity of estimation is at least $c(N_k+\log k) > M_i(\epsilon_k,\delta_k)$. Since $\epsilon_k,\delta_k \rightarrow 0$, there are infinitely many arbitrarily small accuracy and error parameters for which the sample complexity of estimation exceeds $M_i(\epsilon,\delta)$, giving our contradiction.
\end{proof}

\fixedtest*
\begin{proof}
Fix $\eps,\delta \in (0,1)$, any model $q \in \cM$, and a test function $g$ such that there exists $B < \infty$ with $|g(x)| \leq B$ for all $x \in \cX$. Consider $\sev = \{x_1,...,x_m \}$ sampled from $q^\star$. Define the empirical average $$\hat{q} = \frac{1}{m} \sum_{i=1}^m g(x_i).$$ Consider the score function $s(q,\sev) = |\hat{q} - \mathbb{E}_{x \sim q}[g(x)]|$. Via a reverse triangle inequality, we have that $|s(q,\sev) - f_{\text{fixed}}^g| \leq |\hat{q} - \mathbb{E}_{x \sim q^\star}[g(x)]$. Now, applying a Hoeffding Bound gives that if $m \geq O\left(\frac{B^2 \log \frac{1}{\delta}}{\epsilon^2}\right)$, with probability at least $1-\delta$, $|\hat{q} - \mathbb{E}_{x \sim q^\star}[g(x)] < \epsilon$. It follows that $f_{\text{fixed}}^g(\cdot,q^\star)$ is estimable and thus strongly evaluable. This concludes the proof of the first bullet. 

For the second bullet, fix any $c\geq 1$. Suppose for contradiction that
$f_{\text{fixed}}^g$ is $c$-weakly evaluable. Let $\cA$ be the corresponding
evaluation algorithm. Fix $\delta = \frac{1}{4}$ and $\epsilon = \frac{1}{2}$.
Let
\[
m = m_{\mathrm{evl},f_{\text{fixed}}^g}(\epsilon,\delta,c).
\]
Choose $B>1$ large enough so that
\[
\frac{m}{\sqrt B}<\frac14
\]
and
\[
\frac{B-1}{\sqrt B}|g(x_2)|>\epsilon.
\]
By assumption, there exist $x_1,x_2 \in \cX$ such that $|g(x_2)|$ is nonzero
and finite and $|g(x_1)| > B|g(x_2)|$.

Consider the following construction:
\begin{equation*}
    q_1(x) = \begin{cases}
        \frac{1}{\sqrt{B}} & x=x_1\\
        1-\frac{1}{\sqrt{B}} & x=x_2
    \end{cases}
\end{equation*}
and 
\begin{equation*}
    q_2(x) = \begin{cases}
        0 & x=x_1\\
        1 & x=x_2
    \end{cases}.
\end{equation*}

Now, observe that
\[
\TV(q_1,q_2)=\frac{1}{\sqrt B}.
\]
Thus,
\[
\TV(q_1^m,q_2^m)
\leq
m\TV(q_1,q_2)
=
\frac{m}{\sqrt B}
<
\frac14.
\]
Simultaneously, by symmetry of the metric,
\[
f_{\text{fixed}}^g(q_1,q_2)
=
f_{\text{fixed}}^g(q_2,q_1)
=
\frac{1}{\sqrt{B}}|g(x_1)-g(x_2)|
\geq
\frac{B-1}{\sqrt{B}}|g(x_2)|
>
\epsilon.
\]

Under ground truth $q_1$, outputting $q_2$ violates the evaluability definition, since
\[
f_{\text{fixed}}^g(q_2,q_1)>\epsilon
=
c f_{\text{fixed}}^g(q_1,q_1)+\epsilon.
\]
Therefore,
\[
\mathbb{P}_{\sev\sim q_1^m}
\left[
\cA(\{q_1,q_2\},\sev)=q_1
\right]
\geq 1-\delta.
\]
Similarly, under ground truth $q_2$
\[
f_{\text{fixed}}^g(q_1,q_2)>\epsilon
=
c f_{\text{fixed}}^g(q_2,q_2)+\epsilon.
\]
Therefore,
\[
\mathbb{P}_{\sev\sim q_2^m}
\left[
\cA(\{q_1,q_2\},\sev)=q_1
\right]
\leq \delta.
\]

Let
\[
E=\{\sev:\cA(\{q_1,q_2\},\sev)=q_1\}.
\]
Combining the two displays above gives
\[
q_1^m(E)-q_2^m(E)
\geq
1-2\delta
=
\frac12.
\]
Thus,
\[
\TV(q_1^m,q_2^m)\geq \frac12,
\]
which contradicts the earlier bound that $\TV(q_1^m,q_2^m)<\frac14$.
Hence $f_{\text{fixed}}^g$ is not $c$-weakly evaluable. The result follows by the fact that $c$ was chosen arbitrarily.
\end{proof}

\section{Proofs From Section~\ref{sec:similarity}}
\renyi*
\begin{proof}
Consider the following construction:
\begin{equation*}
    q_1(x) = \begin{cases}
        1-\eta -\eta e^{-M} & x=x_0\\
        \eta e^{-M} & x=x_1\\
        \eta & x=x_2
    \end{cases}
\end{equation*}
and 
\begin{equation*}
    q_2(x) = \begin{cases}
        1-\eta -\eta e^{-M} & x=x_0\\
        \eta & x=x_1\\
        \eta e^{-M} & x=x_2.
    \end{cases}
\end{equation*}

We have by symmetry that 
\begin{align*}
   f_{\Renyi}(q_1,q_2) = f_{\Renyi}(q_2,q_1) &= \frac{1}{\alpha - 1}\log\left((1-\eta-\eta e^{-M}) + \eta e^{-\alpha M} + \eta e^{(\alpha - 1)M}\right) \\
   &\geq  \frac{1}{\alpha-1} \log \eta e^{(\alpha - 1)M} \\
   &\geq \frac{M}{2}. 
\end{align*}
The final inequality follows from choosing $\eta = e^{-\frac{(\alpha-1)M}{2}}$ and choosing $M \geq 2$ gives that $f_{\Renyi}(q_1,q_2) \geq 1$ (the same holds from $f_{\Renyi}(q_2,q_1)$ by symmetry of the construction).

Let $q^\star$ be chosen uniformly at random from $\{q_1,q_2\}$ and consider a dataset $\sev$. Now, under either $q_1$ or $q_2$ the total mass on $\{x_1,x_2\}$ is less than $2\eta$. Thus, with probability at least $(1 - 2\eta)^m > 1-2\eta m$, $\sev$ only contains $x_0$, in which the distribution is identical under $q_1$ and $q_2$. It follows that any evaluation algorithm will rank the models incorrectly with probability at least $$\frac{1}{2}(1-2\eta m)\,,$$ which becomes arbitrarily close to $\frac{1}{2}$ as we increase $M$. 
\end{proof}

\KLres*
\begin{proof}
We use the same construction as in the proof of \cref{res:renyi}. We have by symmetry that $$f_{\KL}(q_1||q_2) =f_{\KL}(q_2||q_1) = \eta \log\frac{\eta}{\eta e^{-M}} + \eta e^{-M} \log \frac{\eta e^{-M}}{\eta} = \eta M (1-e^{-M}).$$ Taking $M \geq \frac{2}{\eta}$ ensures that the above quantity is greater than 1. Let $q^\star$ be chosen uniformly at random from $\{q_1,q_2\}$. By the same argument as the proof of \cref{res:renyi}, $\sev$ will contain only $x_0$ with probability at least $1-2\eta m$, so any evaluation algorithm will misrank the models with probability at least $$\frac{1}{2}(1-2\eta m)\,,$$ which becomes arbitrarily close to $\frac{1}{2}$ as we decrease $\eta$.
\end{proof}
\coveragenegative*

\begin{proof}
Fix any $c \geq 1$. Suppose for contradiction that $f_{\cov}$ is $c$-weakly evaluable by some evaluation algorithm $\cA$. Fix $\delta = \frac{1}{4}$ and choose $\gamma > 0$ sufficiently small so that
$1-\gamma > c\gamma + \frac{\gamma}{2}$.
Set $\epsilon = \frac{\gamma}{2}$ and let
\[
m = m_{\mathrm{evl},f_{\cov}}(\epsilon,\delta,c).
\]
Choose $\eta \ll \gamma$ to be fixed later. Let $\cX = \{x_0,x_1\}$ and consider the following construction:
\begin{equation*}
    q_1(x) = \begin{cases}
        \frac{1-\gamma}{N} & x=x_0\\
       1-\frac{1-\gamma}{N} & x=x_1
    \end{cases}
\end{equation*}
and
\begin{equation*}
    q_2(x) = \begin{cases}
        1-\frac{\gamma}{N} & x=x_0\\
        \frac{\gamma}{N} & x=x_1
    \end{cases}
\end{equation*}
and 
\begin{equation*}
    q_3(x) = \begin{cases}
        1-\gamma & x=x_0\\
        \gamma & x=x_1
    \end{cases}
\end{equation*}
and
\begin{equation*}
    q_4(x) = \begin{cases}
        1-\gamma-\eta & x=x_0\\
        \gamma+\eta & x=x_1
    \end{cases}.
\end{equation*}

Now, under $q_3$, we have
$f_{\cov}(q_1,q_3) = 1-\gamma$ and $f_{\cov}(q_2,q_3)=\gamma$

It follows that
\[
f_{\cov}(q_1,q_3)
=
1-\gamma
>
c\gamma+\epsilon
=
c f_{\cov}(q_2,q_3)+\epsilon,
\]
so under $q_3$ as the ground truth, $\cA$ must output $q_2$ with high probability. Otherwise, we would not have $c$-weak evaluability.
Concretely,
\[
\mathbb{P}_{\sev \sim q_3^m}
\left[
\cA(\{q_1,q_2\},\sev)=q_2
\right]
\geq
1-\delta.
\]

Similarly, under $q_4$, we have
$f_{\cov}(q_1,q_4)=0$ and $f_{\cov}(q_2,q_4)=\gamma+\eta$, so 

\[
f_{\cov}(q_2,q_4)
=
\gamma+\eta
>
\epsilon
=
c f_{\cov}(q_1,q_4)+\epsilon.
\]
Therefore,
\[
\mathbb{P}_{\sev \sim q_4^m}
\left[
\cA(\{q_1,q_2\},\sev)=q_2
\right]
\leq
\delta.
\]

Let
\[
E = \{\sev : \cA(\{q_1,q_2\},\sev)=q_2\}.
\]
Combining the two probability bounds above gives
\[
q_3^m(E)-q_4^m(E)
\geq
1-2\delta
=
\frac{1}{2}.
\]
Thus,
\[
\TV(q_3^m,q_4^m)\geq \frac{1}{2}.
\]
However,
\[
\TV(q_3,q_4)=\eta.
\]
Choosing $
0<\eta<\min\left\{\frac{\gamma}{2},\frac{1}{4\max\{m,1\}}\right\}
$ gives that
\[
\TV(q_3^m,q_4^m)
\leq
m\TV(q_3,q_4)
=
m\eta
<
\frac{1}{4},
\]
which is a contradiction. Hence $f_{\cov}$ is not $c$-weakly evaluable. The result follows by the fact that $c$ was chosen arbitrarily.
\end{proof}

\coveragepositive*
\begin{proof}
Fix $\epsilon,\delta \in (0,1)$ and $q^\star, q_1, q_2$ that satisfy the assumptions in the theorem statement. Given a dataset $\sev = \{x_1, ..., x_m\}$ with samples from $q^\star$, define the frequency distribution $\hat{q}(x) = \frac{1}{m} \sum_{i=1}^m \ind{x_i=x}.$ 

Now, define the score function 
\[
s(q,\sev) = \sum_{x \in \supp(\hat q)} \hat{q}(x) \1\left[\frac{\hat{q}(x)}{q(x)} \geq N\right].
\] \footnote{If $q(x) = 0$, we adopt the convention that $\frac{\hat{q}(x)}{q(x)}=\infty$. Intuitively, such a point should count towards the coverage profile penalty.}

We ensure that with high probability,
\[
\1 \left[\frac{\hat{q}(x)}{q(x)} \geq N\right]
=
\1\left[\frac{q^\star(x)}{q(x)} \geq N\right],
\]
for each $q \in \{q_1,q_2\}$ and each $x\in\supp(q^\star)$. In particular, we need that 
\begin{align*}
\left|\frac{\hat{q}(x)}{q(x)} - \frac{q^\star(x)}{q(x)} \right|  
&< 
\left| \frac{q^\star(x)}{q(x)} - N\right|. \\
\intertext{Equivalently, it suffices that}
|\hat{q}(x) - q^\star(x)| 
&< q(x)\left|\frac{q^\star(x)}{q(x)} - N\right| \\
&= q^\star(x)\left|1 - \frac{Nq(x)}{q^\star(x)} \right| \\
& \geq q^\star(x) \frac{\alpha}{N+\alpha},
\end{align*}
where the final inequality follows from \cref{assum:margin}. Thus, it is enough to require
\[
|\hat{q}(x)-q^\star(x)|
<
\frac{\alpha}{N+\alpha}q^\star(x).
\]

Via the multiplicative Chernoff Bound, we get that
\[
\mathbb{P}\left[
|\hat q(x) - q^\star(x)| \geq \frac{\alpha}{N+\alpha}q^\star(x)
\right]
\leq
2 \exp \left\{
-\frac{m\alpha^2q^\star(x)}{3(N+\alpha)^2}
\right\}
\leq
2 \exp \left\{
-\frac{m\alpha^2\gamma}{3(N+\alpha)^2}
\right\},
\]
where the final inequality follows from \cref{assum:lowerbound}. Now, union bounding over $x \in \supp(q^\star)$ and applying \cref{assum:lowerbound} again gives
\[
\mathbb{P}\left[
\exists x\in\supp(q^\star):
|\hat q(x) - q^\star(x)| \geq \frac{\alpha}{N+\alpha}q^\star(x)
\right]
\leq
\frac{2}{\gamma}
\exp \left\{
-\frac{m\alpha^2\gamma}{3(N+\alpha)^2}
\right\}.
\]
Therefore, if
\[
m\geq
\frac{3(N+\alpha)^2}{\alpha^2\gamma}
\log \frac{4}{\gamma\delta},
\]
then with probability at least $1-\delta/2$, for every $x\in\supp(q^\star)$ and every $q\in\{q_1,q_2\}$,
\[
\1 \left[\frac{\hat{q}(x)}{q(x)} \geq N\right]
=
\1\left[\frac{q^\star(x)}{q(x)} \geq N\right].
\]

Let $C(q) = \{x \in \cX : \frac{q^\star(x)}{q(x)} \ge N\}$ denote the set of points that $q$ undercovers w.r.t $q^\star$. Under our earlier event, $s(q,\sev) = \hat q(C(q))$ and $f_{\cov}(q,q^\star)=q^\star(C(q)).$ Applying a Hoeffding Bound and union bounding over $q\in\{q_1,q_2\}$, we have that with probability at least $1-\delta/2$, given
\[
m \geq \frac{2}{\epsilon^2}\log \frac{8}{\delta}
\]
samples,
\[
|s(q,\sev)-f_{\cov}(q,q^\star)| < \epsilon/2
\]
for each $q\in\{q_1,q_2\}$. From this, we get the following: 
\[
s(q_1,\sev) \leq s(q_2,\sev) 
\implies 
f_{\cov}(q_1,q^\star) 
\leq s(q_1,\sev) + \eps/2 
\leq s(q_2,\sev) + \eps/2 
\leq f_{\cov}(q_2,q^\star) + \epsilon
\]
and
\[
s(q_2,\sev) \leq s(q_1,\sev) 
\implies 
f_{\cov}(q_2,q^\star) 
\leq s(q_2,\sev) + \eps/2 
\leq s(q_1,\sev) + \eps/2 
\leq f_{\cov}(q_1,q^\star) + \epsilon.
\]
Now, taking $\cA$ to be the evaluation algorithm induced by $s$, the result follows immediately.
\end{proof}
\section{Proofs From Section~\ref{sec:perplexity}}

\TVandKLnegative*
\begin{proof}
Fix $c \ge 1$ and arbitrary $\eps,\delta \in (0,1)$.  

Let $\cX = \{x_0,x_1,x_2\}$ and fix $M > 0$ to be chosen later.  
Consider the following construction:
\[
p := \frac{1-\eps}{4c},
\qquad
r := \frac{1+\eps}{2} - \frac{p}{2},
\]

\begin{equation*}
    q^\star(x) = \begin{cases}
        1-p & x=x_0\\
        p & x=x_1\\
        0 & x=x_2
    \end{cases},
\end{equation*}
\begin{equation*}
    q_1(x) = \begin{cases}
        1-p-r & x=x_0\\
        p & x=x_1\\
        r & x=x_2
    \end{cases},
\end{equation*}
and 
\begin{equation*}
    q_2(x) = \begin{cases}
        1-pe^{-M} & x=x_0\\
        pe^{-M} & x=x_1\\
        0 & x=x_2
    \end{cases}.
\end{equation*}

Now, observe that $\TV(q_1,q^\star) = r$ and $\TV(q_2,q^\star)=p(1-e^{-M}) \le p.$
Hence
\[
c\,\TV(q_2,q^\star)+\eps \le cp+\eps = \frac{1+3\eps}{4},
\]
while
\[
\TV(q_1,q^\star)=r \ge \frac{3+5\eps}{8}.
\]
Since $\frac{3+5\eps}{8} > \frac{1+3\eps}{4}$ for all $\eps\in(0,1)$, we obtain
\begin{equation}
\label{eq:tv-gap}
\TV(q_1,q^\star) > c\,\TV(q_2,q^\star)+\eps .
\end{equation}

Consider $\sev=(x_1,\dots,x_m)\sim(q^\star)^m$ and let
\[
K = \sum_{i=1}^m \ind{x_i=x_1},
\]
so $K\sim\mathrm{Bin}(m,p)$.  Now, via a simple calculation, we have that
\[
\nll(q_2,\sev)-\nll(q_1,\sev)
= \frac{m-K}{m}\log \frac{1-p-r}{1-pe^{-M}}+\frac{K}{m}\log \frac{p}{pe^{-M}} \geq
\log(1-p-r) + \frac{KM}{m}.
\]
On the event $\{K \ge \frac{mp}{2}\}$, this quantity is positive provided
\[
M > \frac{2|\log (1-p-r)|}{p}.
\]
Via a Chernoff bound, if $m \ge \frac{8}{p}\log(1/\delta)$ we have
\[
\mathbb{P}\left[\nll(q_1,S) \le \nll(q_2,S)\right] \ge 1-\delta.
\]

Combined with \eqref{eq:tv-gap}, it follows that $f_{\TV}(\cdot,q^\star)$ is not weakly evaluable by the nll score.

For the second part, fix $\beta \in (0,\frac{1}{2})$. Consider the following construction and fix $M > 0$ to be chosen later.

\begin{equation*}
    q^\star(x) = \begin{cases}
        1-\beta & x=x_0\\
        \beta & x=x_1\,,
    \end{cases}
\end{equation*}
\begin{equation*}
    q_1(x) = \begin{cases}
        (1-\beta)e^{-M} & x=x_0\\
        1-(1-\beta)e^{-M} & x=x_1\,,
    \end{cases}
\end{equation*}
and
\begin{equation*}
    q_2(x) = \begin{cases}
        1 & x=x_0\\
        0 & x=x_1\,.
    \end{cases}
\end{equation*}

Observe that the only subsets on which $q^\star$ places at least $1-\beta$ mass are $\{x_0\}$ and $\cX$ itself. If we take $E=\{x_0\}$, then
\[
\KL_E(q^\star,q_2)
=
\left[\log(1-\beta)\right]_+
=
0,
\]
so $f_{\beta-\KL}(q_2,q^\star)=0$.

Now, we have
\[
\KL_{\{x_0\}}(q^\star,q_1)
=
\log \frac{1-\beta}{(1-\beta)e^{-M}}
=
M.
\]
Moreover,
\begin{align*}
\KL_{\cX}(q^\star,q_1)
&=
(1-\beta)\log \frac{1-\beta}{(1-\beta)e^{-M}}
+
\beta \log \frac{\beta}{1-(1-\beta)e^{-M}}\\
&=
(1-\beta)M
+
\beta \log \frac{\beta}{1-(1-\beta)e^{-M}}\\
&\ge
(1-\beta)M+\beta\log\beta.
\end{align*}
Choosing
\[
M>\max\left\{1,\frac{1+\beta\log(1/\beta)}{1-\beta}\right\}
\]
ensures that both $\KL_{\{x_0\}}(q^\star,q_1)>1$ and
$\KL_{\cX}(q^\star,q_1)>1$. Therefore,
\[
f_{\beta-\KL}(q_1,q^\star)>1.
\]
Since $\epsilon\in(0,1)$ and $f_{\beta-\KL}(q_2,q^\star)=0$, we have
\[
f_{\beta-\KL}(q_1,q^\star)
>
c f_{\beta-\KL}(q_2,q^\star)+\epsilon.
\]

Take $\sev \sim (q^\star)^m$. Consider the event where there exists a sample from $\sev$ that is $x_1$. This occurs with probability
\[
1-(1-\beta)^m,
\]
which tends to $1$ as $m$ tends to $\infty$. Thus, for $m$ large enough, this probability is greater than $\delta$. On this event, we have that $\nll(q_2,\sev)=\infty$ and $\nll(q_1,\sev)<\infty$. Thus, with probability greater than $\delta$, the nll score incorrectly ranks $q_1$ above $q_2$, failing our definition of evaluability. It follows that $f_{\beta-\KL}$ is not weakly evaluable by the nll score.
\end{proof}
\betaKLnegative*
\begin{proof}
Fix an arbitrary sample size $m \in \mathbb{N}$. Consider the following construction:

Let $A$ be a finite set of size $N$, where $N$ will be chosen later, and let
\[
\mathcal{X} := A \cup \{x_1,x_2\}.
\]

Choose any $\gamma \in (0,\beta/2)$ and $a \in (0,1-\beta)$ to be fixed later, and define
\[
q_1(x)=
\begin{cases}
\frac{1-\beta}{N}, & x \in A,\\
\gamma, & x=x_1,\\
\beta-\gamma, & x=x_2.
\end{cases}
\]

\[
q_2(x)=
\begin{cases}
\frac{a}{N}, & x \in A,\\
1-a, & x=x_1,\\
0, & x=x_2,
\end{cases}
\]

\[
q_0^\star(x)=
\begin{cases}
\frac{1-\beta}{N}, & x \in A,\\
\beta, & x=x_1,\\
0, & x=x_2,
\end{cases}
\]
and for each $S \subseteq A$ with $|S|=N/2$,
\[
q_S^\star(x)=
\begin{cases}
\frac{2(1-\beta)}{N}, & x \in S,\\
0, & x \in A \setminus S,\\
\beta, & x=x_1,\\
0, & x=x_2.
\end{cases}
\]

First observe that $f_{\beta-\KL}(q_1,q^\star_0) = 0$ and $f_{\beta-\KL}(q_1,q^\star_S) = \log 2$ for all $S$.

Now define
\[
L_A(a) := \log \frac{1-\beta}{a}, \qquad L_x(a) := \log \frac{\beta}{1-a}.
\]
Since $a < 1-\beta$, we have $L_x(a) < L_A(a)$.

Under $q_0^\star$, the ratio $q_0^\star(x)/q_2(x)$ is constant and equal to $e^{L_A(a)}$ on $A$, and constant and equal to $e^{L_x(a)}$ at $x_1$. Because $L_x(a) < L_A(a)$, the minimizing admissible set includes $x_1$ and then the smallest possible amount of $A$-mass needed to reach total $q_0^\star$-mass $1-\beta$, namely $1-2\beta$. Note that since $A$ is finite, we may not achieve exactly a mass of $1-\beta$. Thus, up to an $O(1/N)$ 
rounding error (which we can ignore for large enough $N$ to be selected later),
\[
f_{\beta\text{-KL}}(q_2,q_0^\star)
=
G_\beta(a)
:=
\frac{(1-2\beta)L_A(a)+\beta L_x(a)}{1-\beta},
\]
and
\[
f_{\beta\text{-KL}}(q_2,q_S^\star)
=
G_\beta(a)+\frac{1-2\beta}{1-\beta}\log 2
+O(1/N).
\]

Now, in order to have the ranking of the candidate models under the ground truths flip, we need to fix $a$ such that 
\begin{equation}
\label{eq:gbeta}
0 < G_{\beta}(a) < \frac{\beta}{1-\beta}\log 2 
\end{equation}
It is nontrivial that such an $a$ exists, but to see that it does, observe that $G_{\beta}(a)$ is continuous and tends to $\infty$ as $a$ approaches 0. Additionally, for sufficiently small $t$, 
\[
G_{\beta}(1-\beta-t) = \frac{-\beta}{(1-\beta)^2}t + O(t^2) < 0.
\]
It follows by the Intermediate Value Theorem that there exists $a \in (0,1-\beta)$ that causes \eqref{eq:gbeta} to hold.

Now, choose small $\eta > 0$ so that
\[
4\eta < \min\left\{G_\beta(a), \frac{\beta}{1-\beta}\log 2 - G_\beta(a)\right\}.
\]
Choose $N$ sufficiently large so that the rounding errors above are at most
$\eta$ and so that
\[
\TV\left((q_0^\star)^m,\ \mathbb E_S[(q_S^\star)^m]\right)<\frac12.
\]
This is possible because, for fixed $m$, the two sample laws differ by at most
$O(m^2/N)$: marginally each draw has the same law under both distributions, and
the only detectable difference comes from collisions among the $A$-samples.

By the choice of $\eta$, for every admissible $S$ we have
\[
f_{\beta\text{-}\KL}(q_1,q_0^\star)+2\eta < f_{\beta\text{-}\KL}(q_2,q_0^\star),
\]
and
\[
f_{\beta\text{-}\KL}(q_2,q_S^\star)+2\eta < f_{\beta\text{-}\KL}(q_1,q_S^\star).
\]
Fix
\[
\epsilon = 2\eta, \qquad \delta = \frac{1}{4}.
\]

Assume for contradiction that $f_{\beta\text{-}\KL}$ is strongly evaluable. Then there exists an evaluation algorithm $\cA$ such that for every ground truth, with probability at least $3/4$, the algorithm selects the smaller-metric model whenever the metric gap exceeds $\epsilon$.

Define the test
\[
T(\sev) := \1\{\cA(\{q_1,q_2\},\sev) = q_1\}.
\]
Under $q_0^\star$, $q_1$ is better than $q_2$ by more than $\epsilon$, so
\[
\Pr_{\cD_0}[T=1] \geq \frac{3}{4}.
\]
Under every $q_S^\star$, $q_2$ is better than $q_1$ by more than $\epsilon$, so
\[
\Pr_{(q_S^\star)^m}[T=1] \leq \frac{1}{4}.
\]
Averaging over the random choice of $S$, we obtain
\[
\Pr_{\cD_1}[T=1] \leq \frac{1}{4}.
\]
Therefore
\[
\TV((q_0^\star)^m,\ \mathbb E_S[(q_S^\star)^m]) \geq \Pr_{(q_0^\star)^m}[T=1] - \Pr_{\mathbb E_S[(q_S^\star)^m]}[T=1] \geq \frac{1}{2},
\]
contradicting $\TV((q_0^\star)^m,\ \mathbb E_S[(q_S^\star)^m])<1/2$.
\end{proof}
\tvpositivenll*
\begin{proof}
Fix $\epsilon,\delta \in (0,1)$ and models $q^\star,q_1,q_2$ that satisfy the assumption in the proposition statement. For any dataset $\sev = \{x_1,...,x_m\}$ sampled from $q^\star$, define $$q_{\min}(\sev) := \argmin_{q \in \{q_1,q_2\}}\nll(q,\sev)$$ and define $q_{\max}(\sev)$ analogously. Consider the evaluation algorithm $\cA$ that outputs $q_{\min}(\sev)$. It suffices to prove that with probability at least $1-\delta$, $$f_{\TV}(q_{\min}(\sev),q^\star) \leq f_{\TV}(q_{\max}(\sev),q^\star) + \epsilon.$$

Assume without loss of generality that $(q^\star,q_1)$ are $\Delta$-close in ratio. Then $$\nll(q_1,\sev) - \nll(q^\star,\sev) = \frac{1}{m}\sum_{i=1}^m \log \frac{q^\star(x_i)}{q_1(x_i)}$$ is bounded. Thus, via a Hoeffding Bound, if $m \geq \Omega\left( \frac{1}{\epsilon^2} \log \frac{1}{\delta}\right)$ and $\Delta \leq O(\epsilon^2)$, then with probability at least $1- \delta/2$, $$\nll(q_1,\sev) \leq \nll(q^\star,\sev) + \Theta(\epsilon^2).$$ That is, the nll score for $q_1$ must be close to the nll score for $q^\star$.

The next part of our proof is that if a model is far from $q^\star$ in TV distance, it is exponentially unlikely to have nll score close to $q^\star$. To see this, fix any model $q$ and define $$L(q,\sev) = \prod_{i=1}^{m}q(x_i)\,,$$ the product-analog of the nll score. Rankings under $\nll$ are preserved under $L$. That is,
$$\sum_{i=1}^m \log \frac{q^\star(x_i)}{q(x_i)} \leq m\Theta(\epsilon^2) \iff \frac{L(q,\sev)}{L(q^\star,\sev)} \geq e^{-m\Theta(\epsilon^2)}.$$ Equivalently, $$\sqrt{\frac{L(q,\sev)}{L(q^\star,\sev)}}\geq e^{\frac{-m\Theta(\epsilon^2)}{2}}.$$ Applying Markov's Inequality, we get that 
\[
\Pr_{S_{\mathrm{eval}}}\left[
\sqrt{\frac{L(q,S_{\mathrm{eval}})}{L(q^\star,S_{\mathrm{eval}})}}
\ge e^{-\frac{m\Theta(\epsilon^2)}{2}}
\right]
\le
e^{\frac{m\Theta(\epsilon^2)}{2}}
\mathbb{E}_{S_{\mathrm{eval}}}\left[
\sqrt{\frac{L(q,S_{\mathrm{eval}})}{L(q^\star,S_{\mathrm{eval}})}}
\right].
\]
Now, we bound the expectation on the right-hand side as follows:
\[
\mathbb{E}_{S_{\mathrm{eval}}}\left[
\sqrt{\frac{L(q,S_{\mathrm{eval}})}{L(q^\star,S_{\mathrm{eval}})}}
\right]
=
\mathbb{E}_{S_{\mathrm{eval}}}\left[
\prod_{i=1}^m \sqrt{\frac{q(x_i)}{q^\star(x_i)}}
\right]
=
\prod_{i=1}^m
\mathbb{E}_{x\sim q^\star}\left[
\sqrt{\frac{q(x)}{q^\star(x)}}
\right].
\]
Therefore,
\[
\mathbb{E}_{S_{\mathrm{eval}}}\left[
\sqrt{\frac{L(q,S_{\mathrm{eval}})}{L(q^\star,S_{\mathrm{eval}})}}
\right]
=
\left(\int_{\cX}\sqrt{q^\star(x)q(x)}\,dx\right)^m
=
(1-H^2(q^\star,q))^m
\le \exp\{-mH^2(q^\star,q)\},
\]
where
\[
H^2(q^\star,q)
=
1-\int_{\cX}\sqrt{q^\star(x)q(x)}\,dx
\]
is the squared Hellinger distance between \(q^\star\) and \(q\). Putting this all together gives that
\[
\Pr_{S_{\mathrm{eval}}}\left[
\nll(q,S_{\mathrm{eval}})
\le
\nll(q^\star,S_{\mathrm{eval}})+\Theta(\epsilon^2)
\right]
\le
\exp\left\{
\frac{m\Theta(\epsilon^2)}{2}
-
mH^2(q^\star,q)
\right\}.
\]
Now, using the standard inequality
\[
H^2(q^\star,q)\ge \frac{1}{2}\TV(q^\star,q)^2,
\]
we get that if \(\TV(q^\star,q)\ge \epsilon\), then
\[
\Pr_{S_{\mathrm{eval}}}\left[
\nll(q,S_{\mathrm{eval}})
\le
\nll(q^\star,S_{\mathrm{eval}})+\Theta(\epsilon^2)
\right]
\le
\exp\left\{
\frac{m\Theta(\epsilon^2)}{2}
-
\frac{m\epsilon^2}{2}
\right\}.
\]
Taking the hidden constant in \(\Theta(\epsilon^2)\) sufficiently small and taking
\[
m\ge \Omega\left(\frac{1}{\epsilon^2}\log\frac{1}{\delta}\right),
\]
we obtain that, with probability at least \(1-\delta/2\),
\[
\nll(q,S_{\mathrm{eval}})
>
\nll(q^\star,S_{\mathrm{eval}})+\Theta(\epsilon^2).
\]

Now, to conclude the proof, consider \(\nll(q_{\min},S_{\mathrm{eval}})\). By definition, under the event from our Hoeffding bound,
\[
\nll(q_{\min},S_{\mathrm{eval}})
\le
\nll(q_1,S_{\mathrm{eval}})
\le
\nll(q^\star,S_{\mathrm{eval}})+\Theta(\epsilon^2).
\]
By the argument above, any model \(q\) with \(\TV(q^\star,q)\ge \epsilon\) fails this inequality with
probability at least \(1-\delta/2\). Hence, with probability at least \(1-\delta\),
\[
\TV(q^\star,q_{\min})\le \epsilon.
\]
Union bounding over $q_{\min} \in \{q_1,q_2\}$ gives that, if the nll score ranks \(q_1\) above \(q_2\), then \(q_{\min}=q_1\), and so
\[
f_{\TV}(q_1,q^\star)
=
\TV(q^\star,q_1)
\le
\epsilon
\le
\TV(q^\star,q_2)+\epsilon
=
f_{\TV}(q_2,q^\star)+\epsilon.
\]
Similarly, if the nll score ranks \(q_2\) above \(q_1\), 
\[
f_{\TV}(q_2,q^\star)
=
\TV(q^\star,q_2)
\le
\epsilon
\le
\TV(q^\star,q_1)+\epsilon
=
f_{\TV}(q_1,q^\star)+\epsilon.
\]
The result follows.

\end{proof}
\end{document}